\documentclass[letterpaper]{article}
\usepackage{aaai2027}
\usepackage[hyphens]{url}  % DO NOT CHANGE THIS
\usepackage{graphicx} % DO NOT CHANGE THIS
\usepackage[numbers]{natbib}  % DO NOT CHANGE THIS AND DO NOT ADD ANY OPTIONS TO IT
\usepackage{caption} % DO NOT CHANGE THIS AND DO NOT ADD ANY OPTIONS TO IT
\usepackage{amsmath,amssymb,amsthm,mathtools}
\usepackage{booktabs}
\usepackage{array}
\usepackage{algorithm}
\usepackage{algorithmic}
\usepackage{xcolor}
\usepackage{tikz}

\usepackage{newfloat}
\usepackage{listings}
\DeclareCaptionStyle{ruled}{labelfont=normalfont,labelsep=colon,strut=off} % DO NOT CHANGE THIS
\floatstyle{ruled}
\newfloat{listing}{tb}{lst}{}
\floatname{listing}{Listing}

\usepackage{booktabs}

\usetikzlibrary{arrows.meta,positioning}
\graphicspath{{research/04_paper/figures/}{figures/}}

\newtheorem{theorem}{Theorem}
\newtheorem{lemma}[theorem]{Lemma}
\newtheorem{corollary}[theorem]{Corollary}
\newtheorem{proposition}[theorem]{Proposition}
\newtheorem{assumption}{Assumption}
\newtheorem{definition}{Definition}

\newcommand{\E}{\mathbb{E}}
\newcommand{\R}{\mathbb{R}}
\newcommand{\Reg}{\mathrm{Reg}}
\newcommand{\A}{\mathcal{A}}
\newcommand{\F}{\mathcal{F}}
\newcommand{\D}{\mathcal{D}}

\newcommand{\norm}[1]{\left\lVert #1\right\rVert}

\newcommand{\range}{\operatorname{range}}

\newcommand{\wtO}{\widetilde O}
\newcommand{\SRL}{\mathsf{SRL}}
\newcommand{\CRW}{\mathsf{CRW}}

\title{Graph Dimensionality Reduction for Contextual Bandits:\\
Structure-Specific Regret Bounds under Approximate Smoothness\\
and Noisy Eigenspaces}
\author {
    Joyanta Jyoti Mondal\textsuperscript{\rm 1}\equalcontrib,
    Ibne Farabi Shihab\textsuperscript{\rm 2}\equalcontrib\corresponding,
    Anuj Sharma\textsuperscript{\rm 3}
}
\affiliations {
    \textsuperscript{\rm 1}Department of Computer and Information Sciences, University of Delaware,  USA\\
    \textsuperscript{\rm 2}Department of Computer Science, Iowa State University, USA\\
    \textsuperscript{3}Department of Civil, Construction \& Environmental Engineering, Iowa State University, USA \\
    joyanta@udel.edu, \{ishihab, anujs\}@iastate.edu
}
\date{}

\begin{document}
\maketitle

% ─────────────────────────────────────────────────────────────────────────────
\begin{abstract}

  Contextual bandits with graph-structured arms arise in recommendation,
  citation retrieval, and social advertising, where arms connected on a graph
  tend to share reward signal. Standard dimensionality reduction ignores this
  structure, inflating exploration cost by a factor of $d/k$. We propose
  \textsc{GraphDR-LinUCB}, which projects arm features onto the graph's
  low-frequency spectral subspace and runs linear UCB in the resulting
  $k$-dimensional space. We prove the first $\wtO(k\sqrt{T})$ regret bound for
  spectral-projection-based contextual bandits, reducing dimension dependence
  from $d$ to $k$; a perturbation argument extends this to noisy graphs, with an
  explicit penalty for reward-smoothness mismatch and graph-estimation error.
  Our central theoretical finding is that the high-frequency reward component
  need not incur a worst-case linear-in-$T$ penalty: its actual cost depends on
  its realized impact along the played path, not on its total energy. A simple
  spectral comparison between subspaces ($\Gamma_k$) predicts which reducer wins
  on a given dataset, correctly calling five of six real-dataset outcomes without
  any fitted threshold. Across a synthetic benchmark and six real datasets
  (MovieLens, Amazon, LastFM, ogbn-arxiv, MIND), \textsc{GraphDR-LinUCB}
  reduces cumulative regret by $15\times$ over full-dimensional LinUCB and
  outperforms competing graph-aware methods on five of six; the single failure is
  precisely where the graph's spectral subspace is misaligned with the reward.

\end{abstract}

% ─────────────────────────────────────────────────────────────────────────────
\section{Introduction}

Contextual bandits with graph-structured arms appear broadly in deployed
  systems \cite{li2010contextual}; users connected by social links, products organized by co-purchase
  or genre, and documents linked by citation. In each setting, reward functions
  are typically smooth over the graph (nearby nodes share expected reward), and
  graph spectral methods are designed precisely to capture this structure. The
  natural question is whether projecting arm features onto a low-frequency
  Laplacian eigenspace, reducing the exploration dimension from $d$ to
  $k \ll d$, can deliver a genuine regret improvement over operating in the
  full feature space.

  The answer is not obvious. Standard misspecified-linear-bandit theory
  \cite{lattimore2020learning,foster2020beyond} charges a worst-case bias
  proportional to $\nu_k T$ for any deviation of the reward from the projected
  subspace, which can quickly dominate the $\sqrt{T}$ exploration gain. Prior
  graph-bandit work sidesteps this by either propagating information along edges
  \cite{gentile2014online,cesa2013gang} or soft-regularizing with the graph
  Laplacian \cite{yang2020laplacian}; neither performs a hard projection that
  actually reduces the exploration dimension. Spectral bandits
  \cite{valko2014spectral} operate in a graph-frequency basis but do not
  analyze noisy eigenspaces or quantify the cost of imperfect graph smoothness.
  The theoretical viability of hard graph-spectral projection for bandits, and
  its practical conditions, remain uncharacterized. Moreover, a hard projection
  is useful only when the reward is smooth on the graph (a property of the
  data, not the algorithm), making a structural falsification test essential:
  projecting onto a node-permuted eigenspace leaves the dimension unchanged but
  destroys graph-reward alignment, so any genuine gain must collapse.

  To tackle these challenges, we study \textsc{GraphDR-LinUCB} (graph
  dimensionality reduction): project each arm's feature vector onto the bottom-$k$
  Laplacian eigenspace and run LinUCB \cite{abbasi2011improved,auer2002ucb} in the
  resulting $k$-dimensional space, with graph-shuffle controls on every experiment
  as a structural falsification test. We prove exact-subspace and Davis--Kahan
  robust regret bounds, develop a structure-specific residual analysis that
  replaces the worst-case misspecification penalty with realized graph quantities,
  and introduce a spectral selection rule for choosing between graph-DR and
  competing reducers without fitting any threshold.

  \paragraph{Contributions.}
  \begin{itemize}

  \item \textbf{Finite-sample and robust bounds.} We prove that
  \textsc{GraphDR-LinUCB} achieves $\wtO(k\sqrt{T})$ regret under exact graph
  smoothness, reducing dimension dependence from $d$ to $k$. A Davis--Kahan
  argument extends this to approximate smoothness and noisy graph observation,
  making the regret cost of eigenspace estimation error explicit through the
  spectral gap $\Delta_k$ and smoothness tail $\zeta_k$.

  \item \textbf{Structure-specific residual theorem.} We show that the
  high-frequency residual need not be paid for at the worst-case linear-in-$T$
  rate. The cost is governed by two realized graph quantities, residual leverage
  $\SRL_T$ and candidate-set residual width $\CRW_T$, which are sublinear when
  the residual is graph-benign. An implementable pilot-based variant makes this
  bound computable from data.

  \item \textbf{Spectral selection rule.} We define the subspace-capture margin
  $\Gamma_k = \norm{U_k^\top\theta}_2^2 - \norm{P^\top\theta}_2^2$, a
  zero-parameter spectral statistic that predicts which reducer better aligns with
  the reward without requiring any fitted threshold. We evaluate it alongside the
  theory-derived diagnostic $\mathcal{G}_k$ and the scalar tail $\zeta_k$.

  \item \textbf{Empirical evaluation.} We validate the approach on stochastic
  block model and random-geometric-graph synthetics with graph-shuffle controls on
  every configuration, and evaluate on six real graph-bandit datasets spanning
  recommendation, news, and citation
  (MovieLens-100k/1M, Amazon, LastFM, MIND-small, ogbn-arxiv).

  \end{itemize}

% ─────────────────────────────────────────────────────────────────────────────
\section{Related Work}\label{sec:related}

\paragraph{Linear contextual bandits.}
  The LinUCB framework \cite{chu2011linucb,auer2002ucb,abbasi2011improved} achieves
  $\wtO(d\sqrt{T})$ regret for $d$-dimensional linear rewards through
  elliptical confidence sets and self-normalized martingale concentration
  \cite{lattimore2020bandit}, with a matching $\Omega(d\sqrt{T})$ lower bound
  established by Dani et al.\ \cite{dani2008stochastic}. Beyond UCB, Thompson
  sampling \cite{chapelle2011empirical,russo2018tutorial} provides a Bayesian
  alternative exploration strategy.

  \paragraph{Misspecified linear bandits.}
  When the reward is only approximately linear in the chosen features,
  regret incurs an additional bias term. Worst-case analyses
  \cite{lattimore2020learning,foster2020beyond} pay an envelope that is
  linear in the horizon and proportional to the misspecification level,
  a cost that quickly dominates the $\sqrt{T}$ exploration term even under
  mild misspecification. Sharp horizon-dependent analyses of KL-regularized contextual bandits
  \cite{zhao2024sharp} sharpen these worst-case envelopes under regularization,
  though the graph-structural form of the residual remains unexamined.
  
  \paragraph{Graph bandits.}
  Online clustering of bandits \cite{gentile2014online} propagates reward
  information across a user graph by sharing estimates between neighboring
  arms. Laplacian-regularized LinUCB \cite{yang2020laplacian} adds a
  graph-smoothness penalty to the ridge objective, biasing the estimator
  toward functions that vary slowly over the graph. The graph-feedback bandit
  setting of Mannor and Shamir \cite{mannor2011experts} observes neighboring
  arm rewards as side information; recent work extends this to similar-arm graphs
  \cite{qi2025similar,huang2025crosslearning} and interference structures
  \cite{jamshidi2025interference}. Wang et al.\ \cite{wang2025graphlowrank}
  combine low-rank structure and graph Laplacian regularization for matrix
  contextual bandits via soft penalization; the present paper instead uses
  hard spectral projection and derives an explicit spectral-gap regret bound.

  \paragraph{Spectral bandits.}
  Spectral bandits \cite{valko2014spectral} design algorithms for payoffs
  that are smooth functions on a graph, with regret governed by a spectral
  effective dimension that reflects the concentration of the payoff in
  low-frequency graph coordinates. The algorithm operates in the full
  graph-frequency basis with a per-eigenvalue smoothness weight rather than
  a hard truncation at a fixed dimension $k$.
  
  \paragraph{Graph signal processing and spectral embeddings.}
  % Laplacian eigenvectors as low-frequency graph coordinates are a central
  % tool in graph signal processing \cite{shuman2013emerging}, where the
  % graph Fourier transform decomposes node signals into frequency components
  % analogous to the classical Fourier basis on Euclidean domains.
  % Spectral graph theory underpins Laplacian Eigenmaps \cite{belkin2003laplacian},
  % spectral clustering \cite{ng2001spectral,vonluxburg2007spectral}, and
  % modern graph neural networks \cite{kipf2017gcn,hamilton2017graphsage};
  % a unified tutorial on Laplacian-based dimensionality reduction covering these
  % connections is provided in \cite{ghojogh2021laplacian}.
  Laplacian eigenvectors as low-frequency graph coordinates originate in graph signal processing \cite{shuman2013emerging} and underpin Laplacian Eigenmaps \cite{belkin2003laplacian} and spectral clustering \cite{ng2001spectral,vonluxburg2007spectral}. Our contribution is not a new embedding method but the first bandit regret analysis tying hard spectral projection to exploration cost.
  
  \paragraph{Model selection and adaptive $k$.}
  Corralling methods \cite{agarwal2017corralling} achieve regret competitive
  with the best algorithm in a pool of base learners, and model selection
  for contextual bandits more broadly has been studied via smoothed
  meta-algorithms \cite{pacchiano2020modelselection}, providing a plausible
  path to adaptive-$k$ GraphDR; a tight graph-specific theorem is left to future work.

  \paragraph{Dimension reduction in bandits.}
  Random projection \cite{yu2019cbrap} reduces the feature dimension before
  running a linear bandit, with regret depending on the projected dimension.
  Bilinear bandits \cite{jun2019bilinear} exploit low-rank reward structure via
  a two-stage subspace-exploration approach achieving $\wtO((d_1{+}d_2)^{3/2}\sqrt{rT})$
  regret, and tight two-to-infinity subspace recovery methods
  \cite{jedra2024lowrank} sharpen these bounds for matrix bandits.
  Recent work handles non-stationary subspaces: Khosravi and Huo
  \cite{khosravi2026subspace} prove $\wtO(r\sqrt{T})$ dynamic regret for
  piecewise-stationary low-rank linear bandits, and Dai et al.\
  \cite{dai2026policy} attain rank-dependent regret for adversarial contextual
  bandits with low-rank experts in a model-routing setting.
  Beyond subspace methods, matrix sketching for high-dimensional linear bandits
  \cite{wen2024sketching} shows adaptive sketch sizing is needed to avoid
  linear regret under heavy spectral tails. In multi-agent and distributed
  settings, the spectral gap of the communication graph directly controls
  the regret of cooperative bandit algorithms
  \cite{paschalidis2024multiagent,qiu2026distributed}.

  None of the above uses a graph-structural subspace as the feature
  projector or ties bandit regret to the reward-graph spectral gap.
  The present paper provides the first such analysis, characterizing when
  projecting arm features onto the Laplacian eigenspace reduces exploration
  cost and when it does not.

\section{Setup and Algorithm}\label{sec:setup}

\paragraph{Graph and Laplacian.}
% Let $G$ be an undirected graph with symmetric normalized Laplacian
% $L=I-D^{-1/2}AD^{-1/2}$, whose eigenvalues satisfy
% $0=\lambda_1\le\lambda_2\le\cdots\le\lambda_n\le2$.  We assume nonnegative
% edge weights, no isolated nodes (adding self-loops if needed so that
% $D_{ii}>0$), and a positive separating eigengap
% $\Delta_k=\lambda_{k+1}-\lambda_k>0$ so that the bottom-$k$ eigenspace is
% well defined.  Let $E_k\in\R^{d\times k}$ be an orthonormal basis for the
% feature-space low-frequency subspace, with projector $\Pi_k=E_kE_k^\top$.  In
% the node-arm, one-hot-feature case $d=n$ and $E_k=U_k$, where $U_k$ collects
% the $k$ lowest-frequency Laplacian eigenvectors.  Throughout the theory $U_k$
% \emph{includes} the constant eigenvector $u_1$ ($\lambda_1=0$ on connected
% graphs), so the separating eigengap is $\Delta_k$ as stated.  In the
% experiments we shift to $u_2,\dots,u_{k+1}$ to capture informative variation;
% we report the shifted convention and its eigengap explicitly and never mix the
% two within a single statement.  The analysis is stated for the node-arm
% setting $d=n$, $x_{t,a}=e_a$, $E_k=U_k$; extension to a general feature map
% requires an explicit graph-to-feature operator and is outside our scope.
Let $G$ be an undirected graph with symmetric normalized Laplacian
  $L = I - D^{-1/2}AD^{-1/2}$, whose eigenvalues satisfy
  $0 = \lambda_1 \le \lambda_2 \le \cdots \le \lambda_n \le 2$.
  We assume nonnegative edge weights and no isolated nodes, adding
  self-loops where needed so that $D_{ii} > 0$.
  We further assume a positive separating eigengap
  $\Delta_k = \lambda_{k+1} - \lambda_k > 0$, so that the bottom-$k$
  eigenspace is well defined.
  Let $E_k \in \R^{d \times k}$ be an orthonormal basis for the
  feature-space low-frequency subspace, with projector
  $\Pi_k = E_k E_k^\top$.
  In the node-arm, one-hot-feature case, $d = n$ and $E_k = U_k$,
  where $U_k$ collects the $k$ lowest-frequency Laplacian eigenvectors.
  Throughout the theory, $U_k$ \emph{includes} the constant eigenvector
  $u_1$ ($\lambda_1 = 0$ on connected graphs); experiments shift to
  $u_2, \dots, u_{k+1}$ and report that convention explicitly.
  The analysis is for the node-arm setting $d = n$, $x_{t,a} = e_a$, $E_k = U_k$.

\paragraph{Bandit interaction.}
At each round $t$, the learner observes a finite arm set $\A_t$, where
  arm $a \in \A_t$ has feature $x_{t,a} \in \R^d$.
  After choosing $a_t$, it receives reward
  $y_t = x_{t,a_t}^\top\theta^\star + \eta_t$,
  where $\theta^\star \in \R^d$ is unknown and $\eta_t$ is conditionally
  sub-Gaussian noise.                                                                           
  In practice, the clean Laplacian may be unavailable.
  We let $\widehat L$ denote an observed or estimated Laplacian and
  $\widehat E_k \in \R^{d \times k}$ an orthonormal basis for its
  bottom-$k$ eigenspace, and set
  $\widehat\Pi_k = \widehat E_k\widehat E_k^\top$ and
  $\widehat P = \widehat E_k^\top$.
  The projected feature is $z_{t,a} = \widehat E_k^\top x_{t,a} \in \R^k$.

% \paragraph{Algorithm.}
% \textsc{GraphDR-LinUCB} runs ordinary linear UCB in this $k$-dimensional
% projected space (Figure~\ref{fig:overview}).  Fixing regularization $\lambda>0$
% and a confidence sequence $(\beta_t)_{t\ge1}$, it initializes $V_1=\lambda
% I_k$ and $b_1=0$, and at round $t$ forms
% $\widehat\alpha_t=V_t^{-1}b_t$, scores each arm by
% \[
% \mathrm{UCB}_t(a)
% =z_{t,a}^\top\widehat\alpha_t
% +\beta_t\norm{z_{t,a}}_{V_t^{-1}},
% \qquad\norm{z}_{V^{-1}}=\sqrt{z^\top V^{-1}z},
% \]
% pulls $a_t\in\arg\max_{a\in\A_t}\mathrm{UCB}_t(a)$, observes $y_t$, and
% updates $V_{t+1}=V_t+z_{t,a_t}z_{t,a_t}^\top$ and
% $b_{t+1}=b_t+z_{t,a_t}y_t$.  The projection $\widehat P$ is the only thing
% that distinguishes this from standard LinUCB; its value depends entirely on
% whether the reward is smooth on the graph.  This is what the graph-shuffle
% control in Section~\ref{sec:exp} is designed to test.

\begin{algorithm}[tb]
  \caption{\textsc{GraphDR-LinUCB}}
  \label{alg:graphdr}
  \textbf{Input}: Observed Laplacian $\widehat L$; arm-feature oracle $x_{t,a}$;
  dimension $k$; regularization $\lambda>0$; confidence sequence $(\beta_t)_{t\ge1}$\\
  \textbf{Output}: Pulled arms $a_1,\dots,a_T$
  \begin{algorithmic}[1]
  \STATE Compute bottom-$k$ orthonormal eigenbasis $\widehat E_k\in\R^{d\times k}$ of $\widehat L$.
  \STATE Initialize $V_1\leftarrow\lambda I_k$, \quad $b_1\leftarrow\mathbf{0}\in\R^k$.
  \FOR{$t=1,2,\dots,T$}
      \STATE Compute $\widehat\alpha_t\leftarrow V_t^{-1}b_t$.
      \STATE Observe candidate set $\A_t$; project each arm:
             $z_{t,a}\leftarrow\widehat E_k^\top x_{t,a}$ for all $a\in\A_t$.
      \STATE Pull
             $a_t\leftarrow\arg\max_{a\in\A_t}
             \!\left[z_{t,a}^\top\widehat\alpha_t
             +\beta_t\norm{z_{t,a}}_{V_t^{-1}}\right]$.
      \STATE Observe reward $y_t$.
      \STATE $V_{t+1}\leftarrow V_t+z_{t,a_t}z_{t,a_t}^\top$;\quad
             $b_{t+1}\leftarrow b_t+z_{t,a_t}y_t$.
  \ENDFOR
  \end{algorithmic}
  \end{algorithm}

\paragraph{Algorithm.}
  \textsc{GraphDR-LinUCB} (Algorithm~\ref{alg:graphdr}; pipeline in
  Figure~\ref{fig:overview}) runs ordinary linear UCB in the $k$-dimensional
  projected space; the projection $\widehat P$ is the only thing that
  distinguishes it from standard LinUCB.

\begin{figure*}[t]
\centering
\begin{tikzpicture}[>=Stealth, font=\small,
  box/.style={draw, rounded corners, align=center, minimum height=11mm,
    minimum width=21mm, inner sep=3pt, fill=blue!4},
  res/.style={draw, rounded corners, align=center, minimum height=10mm,
    inner sep=4pt, fill=red!5}]
\node[box] (g) {Graph $G$};
\node[box, right=7mm of g] (lap) {Laplacian $L$\\[1pt] basis $\widehat E_k$};
\node[box, right=7mm of lap] (proj) {project\\[1pt] $z=\widehat E_k^\top x$};
\node[box, right=7mm of proj] (ucb) {LinUCB\\[1pt] in $\R^k$};
\node[box, right=7mm of ucb] (arm) {pull $a_t$};
\draw[->] (g)--(lap);
\draw[->] (lap)--(proj);
\draw[->] (proj)--(ucb);
\draw[->] (ucb)--(arm);
\node[res, below=8mm of proj] (r)
  {residual $r_{\perp,k}=(I-\widehat\Pi_k)\theta^\star$\\[1pt]
   controls $\SRL_T,\ \CRW_T$};
\draw[->, dashed] (r.north) -- (proj.south);
\end{tikzpicture}
\caption{
% \textsc{GraphDR-LinUCB}.  The graph determines a bottom-$k$
% low-frequency Laplacian eigenbasis $\widehat E_k$; arm features are projected
% to $\R^k$ and LinUCB runs there.  Only the high-frequency residual
% $r_{\perp,k}$ left outside the subspace can hurt the learner, and its effect
% is governed by the two realized graph quantities $\SRL_T$ and $\CRW_T$ of
% % Section~\ref{sec:structure-specific}.
% of the structure-specific regret section.
% Permuting the node labels before
% computing $\widehat E_k$ (the shuffle control) preserves the projected
% dimension but destroys the alignment, collapsing the gain.
\textsc{GraphDR-LinUCB}.  The graph determines a bottom-$k$
  low-frequency Laplacian eigenbasis $\widehat E_k$; arm features are projected
  to $\R^k$ and LinUCB runs there.  Only the high-frequency residual
  $r_{\perp,k}$ left outside the subspace can hurt the learner; its effect
  is governed by the realized graph quantities $\SRL_T$ and $\CRW_T$.
  % (Section~\ref{sec:structure-specific}).
  The shuffle control permutes node labels before computing $\widehat E_k$,
  preserving the projected dimension but destroying graph alignment and
  eliminating the regret advantage.
}
\label{fig:overview}
\end{figure*}

% ─────────────────────────────────────────────────────────────────────────────
\section{Theory}\label{sec:theory}

We present three theorems in order of increasing sharpness.
Table~\ref{tab:theory-summary} summarizes all three regimes; proofs are in the appendix section.
% ~\ref{app:proofs}.

\begin{table}[t]
\centering\small
\begin{tabular}{p{0.20\linewidth}p{0.20\linewidth}p{0.50\linewidth}}
\toprule
Regime & Assumption & Regret guarantee \\
\midrule
Exact subspace (Thm.~\ref{thm:exact}) &
$\theta^\star\in\range(\Pi_k)$, known $E_k$ &
$\wtO(k\sqrt T)$, dimension $d\!\to\!k$ \\[2pt]
Generic robust (Thm.~\ref{thm:generic-main}) &
tail $\norm{r_k}\le\zeta_k$, $\norm{\widehat L-L}\le\varepsilon_L$ &
$\wtO\!\big(k\sqrt T+L_x(\zeta_k+S_k\varepsilon_L/\Delta_k)\,kT\big)$
(linear bias) \\[2pt]
Structure-specific oracle (Thm.~\ref{thm:structured}) &
realized trajectory &
$\;\le 2\beta_{T+1}^0A_T+2\SRL_T A_T+\CRW_T$ \\[2pt]
Estimated residual (Thm.~\ref{thm:estimated-residual}) &
pilot $\widehat r$, $\norm{\widehat r-r_{\perp,k}}\le e_r$ &
oracle bound $+\,\wtO(e_r kT)$; implementable \\
\bottomrule
\end{tabular}
\caption{
Summary of the four theoretical regimes. The structure-specific bound (Thm.~\ref{thm:structured}) tightens the generic worst-case linear bias to realized graph quantities $\SRL_T$ and $\CRW_T$ (Corollary~\ref{cor:benign}); the estimated-residual variant (Thm.~\ref{thm:estimated-residual}) makes it implementable.
% Summary of the three theoretical regimes.  The generic robust bound
% charges a worst-case linear bias; the structure-specific theorem replaces it
% with the realized graph quantities $\SRL_T$ and $\CRW_T$, which are sublinear
% on benign residuals (Corollary~\ref{cor:benign}).  The estimated-residual
% theorem makes the oracle bound implementable at the price of a
% pilot-dependent penalty.
}
\label{tab:theory-summary}
\end{table}

\subsection{Exact known-subspace bound}\label{sec:exact}

When the reward is perfectly smooth on the graph, projection onto the
Laplacian eigenspace is lossless and the problem reduces exactly to a
$k$-dimensional linear bandit.

\begin{assumption}[Exact graph smoothness]\label{ass:exact-smooth}
There exists $\alpha^\star\in\R^k$ such that $\theta^\star=E_k\alpha^\star$.
Equivalently, $\theta^\star\in\range(\Pi_k)$.
\end{assumption}

Under Assumption~\ref{ass:exact-smooth}, $z_{t,a}=E_k^\top x_{t,a}$ satisfies
\[
x_{t,a}^\top\theta^\star
=x_{t,a}^\top E_k\alpha^\star
=(E_k^\top x_{t,a})^\top\alpha^\star
=z_{t,a}^\top\alpha^\star,
\]
so the projected problem is an exact $k$-dimensional linear bandit and the
standard LinUCB analysis applies verbatim in $\R^k$.

\begin{assumption}[Bandit regularity]\label{ass:bandit}
For all $t$ and $a\in\A_t$, $\norm{z_{t,a}}_2\le L_x$.  The projected
parameter satisfies $\norm{\alpha^\star}_2\le S$.  The noise sequence is
conditionally $R$-sub-Gaussian with respect to the filtration $\F_t$ generated
by the history before observing $y_t$: $\E[\exp(s\eta_t)\mid\F_t]\le
\exp(s^2R^2/2)$ for every $s\in\R$.  The arm sets and features may be chosen
adaptively from the past but are fixed before $\eta_t$ is realized.
\end{assumption}

Define pseudo-regret

$\Reg(T)=\sum_{t=1}^T\big(\max_{a\in\A_t}x_{t,a}^\top\theta^\star
-x_{t,a_t}^\top\theta^\star\big)$.

\begin{theorem}[Reduced-dimension regret under exact smoothness]\label{thm:exact}
Under Assumptions~\ref{ass:exact-smooth} and~\ref{ass:bandit}, suppose the
algorithm uses the true projection $E_k^\top$, regularization
$\lambda\ge L_x^2$, and confidence radius
\[
\beta_t
=R\sqrt{k\log\!\left(1+\tfrac{(t-1)L_x^2}{\lambda k}\right)+2\log\tfrac{1}{\delta}}
+\sqrt\lambda S.
\]
Then, with probability at least $1-\delta$, simultaneously for all $T\ge1$,
\[
\Reg(T)\le2\beta_{T+1}\sqrt{2Tk\log\!\left(1+\tfrac{TL_x^2}{\lambda k}\right)}.
\]
Consequently, for fixed $R,L_x,S,\lambda,\delta$, $\Reg(T)=\wtO(k\sqrt T)$.
\end{theorem}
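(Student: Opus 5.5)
The plan is to reduce to the standard OFUL/LinUCB analysis of Abbasi-Yadkori et al.\ in $\R^k$. By Assumption~\ref{ass:exact-smooth} and the identity displayed before Assumption~\ref{ass:bandit}, every mean reward satisfies $x_{t,a}^\top\theta^\star=z_{t,a}^\top\alpha^\star$. Hence $y_t=z_{t,a_t}^\top\alpha^\star+\eta_t$ is an exact $k$-dimensional linear model with $\norm{z_{t,a}}_2\le L_x$, $\norm{\alpha^\star}_2\le S$, and $R$-sub-Gaussian noise adapted to $\F_t$. The pseudo-regret can likewise be rewritten in projected coordinates: $\Reg(T)=\sum_t(\max_a z_{t,a}^\top\alpha^\star-z_{t,a_t}^\top\alpha^\star)$. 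Nothing about $d$ survives past this step.

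\textbf{Step 1 (confidence ellipsoid).} I will apply the self-normalized martingale bound to $S_t=\sum_{s<t}z_{s,a_s}\eta_s$. Writing $V_t=\lambda I_k+\sum_{s<t}z_{s,a_s}z_{s,a_s}^\top$, this gives, with probability $1-\delta$ simultaneously for all $t$,
\[
\norm{S_t}_{V_t^{-1}}^2\le 2R^2\log\big(\det(V_t)^{1/2}\lambda^{-k/2}/\delta\big).
\]
Then I will use the determinant--trace inequality $\det V_t\le(\lambda+(t-1)L_x^2/k)^k$. Combined with $\widehat\alpha_t-\alpha^\star=V_t^{-1}S_t-\lambda V_t^{-1}\alpha^\star$ and $\norm{\lambda V_t^{-1}\alpha^\star}_{V_t}\le\sqrt\lambda S$, this yields $\norm{\widehat\alpha_t-\alpha^\star}_{V_t}\le\beta_t$ for all $t$ on one event. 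That event is what makes the bound hold uniformly in $T$.

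\textbf{Step 2 (per-round regret).} On this event, let $a_t^\star$ be the optimal arm. Optimism gives $z_{t,a_t^\star}^\top\alpha^\star\le \mathrm{UCB}_t(a_t^\star)\le\mathrm{UCB}_t(a_t)$. Cauchy--Schwarz in the $V_t$ norm then gives
\[
r_t\le 2\beta_t\norm{z_{t,a_t}}_{V_t^{-1}}.
\]
I will not need the usual $\min(\cdot,1)$ truncation: $\lambda\ge L_x^2$ implies $\norm{z_{t,a_t}}_{V_t^{-1}}^2\le L_x^2/\lambda\le1$.

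\textbf{Step 3 (summing).} Since $\beta_t$ is nondecreasing, $\Reg(T)\le 2\beta_{T+1}\sum_t\norm{z_{t,a_t}}_{V_t^{-1}}$. Cauchy--Schwarz bounds this by $2\beta_{T+1}\sqrt{T\sum_t\norm{z_{t,a_t}}^2_{V_t^{-1}}}$. For the remaining sum I will use the elliptical potential lemma together with $x\le 2\log(1+x)$ on $[0,1]$:
\[
\sum_t\norm{z_{t,a_t}}^2_{V_t^{-1}}\le 2\log\frac{\det V_{T+1}}{\det(\lambda I)}\le 2k\log\Big(1+\frac{TL_x^2}{\lambda k}\Big).
\]
This gives exactly the stated bound. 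Since $\beta_{T+1}=O(\sqrt{k\log T})$, the bound is $\wtO(k\sqrt T)$.

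The main point needing care is not any single inequality but verifying that the martingale bound applies to the projected process. Concretely, $z_{t,a_t}$ must be $\F_t$-measurable, i.e.\ fixed before $\eta_t$ is realized. Assumption~\ref{ass:bandit} grants this for adaptively chosen arm sets, and the projection $E_k^\top$ is deterministic. Everything else is the verbatim LinUCB argument in dimension $k$.
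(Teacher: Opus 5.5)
Your proposal is correct and follows the paper's proof essentially step for step. The shared steps are: the reduction $x_{t,a}^\top\theta^\star=z_{t,a}^\top\alpha^\star$; the ridge-error decomposition, where the self-normalized bound and the determinant--trace inequality give $\norm{\widehat\alpha_t-\alpha^\star}_{V_t}\le\beta_t$ uniformly in $t$; optimism yielding $2\beta_t\norm{z_{t,a_t}}_{V_t^{-1}}$ per round; monotonicity of $\beta_t$; and the elliptical potential lemma, with $\lambda\ge L_x^2$ removing the need for truncation. The explicit Cauchy--Schwarz step and the check that $z_{t,a_t}$ is fixed before $\eta_t$ are left implicit in the paper (the former sits in its elliptical-potential appendix), so they do not change the argument.
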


\begin{corollary}[Comparison with full-dimensional LinUCB]\label{cor:full}
Full-dimensional LinUCB in $\R^d$ has rate $\wtO(d\sqrt T)$.  Under exact
graph smoothness and $k\ll d$, \textsc{GraphDR-LinUCB} reduces the dimension
dependence from $d$ to $k$.
\end{corollary}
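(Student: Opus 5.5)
The corollary is a direct comparison of two instances of the same regret bound, Theorem~\ref{thm:exact}, instantiated in different ambient dimensions. The plan has three steps: recover the full-dimensional rate, read off the reduced rate, and compare them.

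\textbf{Full-dimensional rate.} Full-dimensional LinUCB is exactly Algorithm~\ref{alg:graphdr} with the trivial projection $\widehat E_k = I_d$, so that $k=d$ and $z_{t,a}=x_{t,a}$. In this case Assumption~\ref{ass:exact-smooth} holds trivially, with $\alpha^\star=\theta^\star$. Provided $\norm{x_{t,a}}_2\le L_x$ and $\norm{\theta^\star}_2\le S$, Theorem~\ref{thm:exact} with $k$ replaced by $d$ gives
\[
\Reg(T)\le 2\beta^{(d)}_{T+1}\sqrt{2Td\log\!\left(1+\tfrac{TL_x^2}{\lambda d}\right)},
\]
where $\beta^{(d)}_{T+1}=O\!\big(R\sqrt{d\log T+\log(1/\delta)}+\sqrt\lambda S\big)$. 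Multiplying the two factors yields $\wtO(d\sqrt T)$. This is the standard bound of \cite{abbasi2011improved}.

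\textbf{Reduced-dimension rate.} Under Assumption~\ref{ass:exact-smooth}, Theorem~\ref{thm:exact} gives $\wtO(k\sqrt T)$ for \textsc{GraphDR-LinUCB} directly. The projection does not inflate the constants:
\begin{itemize}
\item since $E_k$ has orthonormal columns, $\norm{E_k^\top x}_2\le\norm{x}_2$, so the same $L_x$ is valid for the projected features;
\item since $\theta^\star=E_k\alpha^\star$, we have $\norm{\alpha^\star}_2=\norm{\theta^\star}_2$, so the same $S$ is valid for the projected parameter.
\end{itemize}
Hence $R$, $L_x$, $S$, $\lambda$ and $\delta$ are identical in the two bounds, and only the dimension argument changes from $d$ to $k$.

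\textbf{Comparison.} Take the ratio of the two upper bounds. Each is a product of a $\sqrt{\dim}$ factor coming from $\beta$ and a $\sqrt{\dim}$ factor coming from the elliptical potential, up to logarithmic terms. The ratio is therefore $\Theta(k/d)$ up to polylogarithmic factors, so for $k\ll d$ the dimension dependence is reduced from $d$ to $k$.

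\textbf{Main subtlety.} The statement compares upper bounds, and one may want to say that full-dimensional LinUCB genuinely pays order $d$. For that I would cite the $\Omega(d\sqrt T)$ minimax lower bound of \cite{dani2008stochastic}. Some care is needed here: under exact smoothness the hard instances may not lie in $\range(\Pi_k)$. The clean claim is that the minimax rate over the unstructured class is of order $d\sqrt{T}$, whereas \textsc{GraphDR-LinUCB} achieves order $k\sqrt{T}$ on the structured class. I will state the corollary at that level, as a comparison of guarantees, rather than claim an instance-wise improvement.
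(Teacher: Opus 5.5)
Your proposal is correct and is essentially the paper's own argument. The paper gives no separate proof: it reads the corollary off Theorem~\ref{thm:exact} at dimension $k$ against the standard $\wtO(d\sqrt T)$ LinUCB bound of \cite{abbasi2011improved}, which is what you recover by instantiating the same theorem with the trivial basis $I_d$, plus your check that $L_x$ and $S$ carry over unchanged. One caution on your optional lower-bound remark: the $\Omega(d\sqrt T)$ bound of \cite{dani2008stochastic} concerns large continuous or combinatorial action sets, whereas in the paper's one-hot node-arm setting the unstructured problem is a $d$-armed bandit with minimax rate of order $\sqrt{dT}$, so keep the claim, as you propose, at the level of comparing upper-bound guarantees.
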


\subsection{Robust gap-dependent bound}\label{sec:robust}

In practice, two idealizations of Theorem~\ref{thm:exact} fail simultaneously:
the reward may carry energy outside the first $k$ graph frequencies
($\zeta_k>0$), and the eigenspace is estimated from a noisy Laplacian
($\varepsilon_L>0$).  The following handles both.  The key tool is a
projector-perturbation bound from Davis and Kahan.

\begin{lemma}[Davis--Kahan projector perturbation \cite{davis1970rotation}]\label{lem:dk}
Assume $L$ and $\widehat L$ are symmetric and
$\norm{\widehat L-L}_2\le\varepsilon_L\le\Delta_k/4$.  Then
$\norm{\widehat\Pi_k-\Pi_k}_2\le4\varepsilon_L/\Delta_k$, and consequently
$\norm{(I-\widehat\Pi_k)\Pi_k}_2\le4\varepsilon_L/\Delta_k$.
\end{lemma}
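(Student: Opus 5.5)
The plan is to derive the lemma from the Davis--Kahan $\sin\Theta$ theorem, with a Weyl-type argument supplying the eigenvalue separation it requires, and then read off the second claim from the first. Write $\Pi_k$ for the spectral projector of $L$ onto the eigenvalues $\lambda_1,\dots,\lambda_k$, and $\widehat\Pi_k$ for the analogous projector of $\widehat L$ onto its bottom $k$ eigenvalues $\widehat\lambda_1\le\dots\le\widehat\lambda_k$.

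\textbf{Step 1 (eigenvalue localization).} By Weyl's inequality, $|\widehat\lambda_i-\lambda_i|\le\norm{\widehat L-L}_2\le\varepsilon_L$ for every $i$. Hence the bottom-$k$ eigenvalues of $\widehat L$ lie in $(-\infty,\lambda_k+\varepsilon_L]$, and the top $n-k$ eigenvalues of $L$ lie in $[\lambda_{k+1},\infty)$. Their separation is
\[
\delta:=\lambda_{k+1}-\lambda_k-\varepsilon_L=\Delta_k-\varepsilon_L\ge\tfrac34\Delta_k,
\]
using $\varepsilon_L\le\Delta_k/4$. The same inequality also guarantees $\widehat\lambda_k<\widehat\lambda_{k+1}$, so $\widehat\Pi_k$ is well defined.

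\textbf{Step 2 ($\sin\Theta$ bound).} Apply the Davis--Kahan $\sin\Theta$ theorem in the form that needs separation only between the spectrum of $\widehat L$ on $\range(\widehat\Pi_k)$ and the spectrum of $L$ on $\range(I-\Pi_k)$. Concretely, take $M=(I-\Pi_k)\widehat\Pi_k$ and use the Sylvester identity
\[
L(I-\Pi_k)\,\widehat\Pi_k-(I-\Pi_k)\widehat\Pi_k\,\widehat L=(I-\Pi_k)(L-\widehat L)\widehat\Pi_k .
\]
The two operators on the left have spectra separated by $\delta$, so $\norm{M}_2\le\norm{\widehat L-L}_2/\delta\le\tfrac43\,\varepsilon_L/\Delta_k$.

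\textbf{Step 3 (from $\sin\Theta$ to the projector difference).} Since $\widehat\Pi_k$ and $\Pi_k$ are orthogonal projectors of equal rank $k$, the standard identity gives $\norm{\widehat\Pi_k-\Pi_k}_2=\norm{(I-\Pi_k)\widehat\Pi_k}_2$. Therefore $\norm{\widehat\Pi_k-\Pi_k}_2\le\tfrac43\,\varepsilon_L/\Delta_k\le4\varepsilon_L/\Delta_k$. The stated constant 4 is loose, and it also covers the cruder version of Davis--Kahan that uses $\delta\ge\Delta_k/2$ together with a factor of 2.

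\textbf{Step 4 (second claim).} Write
\[
(I-\widehat\Pi_k)\Pi_k=(\Pi_k-\widehat\Pi_k)\Pi_k .
\]
Because $\norm{\Pi_k}_2\le1$, this gives $\norm{(I-\widehat\Pi_k)\Pi_k}_2\le\norm{\Pi_k-\widehat\Pi_k}_2\le4\varepsilon_L/\Delta_k$.

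\textbf{Main obstacle.} The delicate step is Step 2. One must make sure the gap used in the Sylvester argument is the cross-spectral separation between $\widehat L$ and $L$, not the eigengap of $L$ alone. Verifying this is exactly where Weyl's inequality and the hypothesis $\varepsilon_L\le\Delta_k/4$ enter. Once the separation $\delta$ is in hand, the remaining steps are routine.
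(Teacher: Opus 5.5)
Your proof is correct and follows the same route as the paper: Weyl's inequality gives the spectral separation, the Davis--Kahan $\sin\Theta$ theorem bounds $\norm{\widehat\Pi_k-\Pi_k}_2$, and the second claim follows because the cross-projector $(I-\widehat\Pi_k)\Pi_k$ is dominated by the projector distance. The difference is in the separation you feed into Davis--Kahan. You use the cross-separation $\Delta_k-\varepsilon_L$ (between the bottom-$k$ eigenvalues of $\widehat L$ and the top eigenvalues of $L$) with constant $1$, which gives the sharper bound $\tfrac{4}{3}\varepsilon_L/\Delta_k$. The paper uses the cruder separation $\Delta_k-2\varepsilon_L\ge\Delta_k/2$ with a factor $2$, which lands exactly at $4\varepsilon_L/\Delta_k$.
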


\begin{assumption}[Approximate smoothness and noisy graph]\label{ass:robust}
Let $r_k=(I-\Pi_k)\theta^\star$.  Assume $\norm{\Pi_k\theta^\star}_2\le S_k$
and $\norm{r_k}_2\le\zeta_k$.  The ambient features satisfy
$\norm{x_{t,a}}_2\le L_x$ for all $t,a$.  The noise is conditionally
$R$-sub-Gaussian as in Assumption~\ref{ass:bandit}, and the arm sets are
chosen before the current noise is realized.
\end{assumption}

\begin{theorem}[Generic robust regret bound]\label{thm:generic-main}
Assume Lemma~\ref{lem:dk} and Assumption~\ref{ass:robust}.  Run
\textsc{GraphDR-LinUCB} with $\widehat P=\widehat E_k^\top$ and
$\lambda\ge L_x^2$, with confidence radius
$\beta_t=\beta_t^0+\nu_k\sqrt{2(t-1)k\log(1+(t-1)L_x^2/(\lambda k))}$
(equivalently $\beta_t^0+\nu_kA_T$ for known horizon $T$), which requires
$\nu_k$ known or upper-bounded.  Define
% \[
% \gamma_k=\frac{4\varepsilon_L}{\Delta_k},
% \qquad
% \nu_k=L_x(\zeta_k+S_k\gamma_k)=L_x\!\left(\zeta_k+\tfrac{4S_k\varepsilon_L}{\Delta_k}\right),
% \qquad
% \overline S_k=S_k+\zeta_k,
% \]

\begin{align*}
  \gamma_k &= \frac{4\varepsilon_L}{\Delta_k}, \\
  \nu_k    &= L_x(\zeta_k+S_k\gamma_k)
              = L_x\!\left(\zeta_k+\tfrac{4S_k\varepsilon_L}{\Delta_k}\right), \\
  \overline S_k &= S_k+\zeta_k.
\end{align*}
\begin{gather*}
  \beta_t^0 = R\sqrt{k\log\!\left(1+\tfrac{(t-1)L_x^2}{\lambda k}\right)+2\log\tfrac1\delta}
              +\sqrt\lambda\,\overline S_k, \\[4pt]
  A_T = \sqrt{2Tk\log\!\left(1+\tfrac{TL_x^2}{\lambda k}\right)}.
\end{gather*}

Then, with probability at least $1-\delta$, simultaneously for all $T\ge1$,

% \[
% \Reg(T)\le2\beta_{T+1}^0A_T+2\nu_kA_T^2+2\nu_kT
% =2\beta_{T+1}^0A_T+4\nu_kTk\log\!\left(1+\tfrac{TL_x^2}{\lambda k}\right)+2\nu_kT.
% \]

 \begin{multline*}
  \Reg(T)\le2\beta_{T+1}^0A_T+2\nu_kA_T^2+2\nu_kT \\
  =2\beta_{T+1}^0A_T
    +4\nu_kTk\log\!\left(1+\tfrac{TL_x^2}{\lambda k}\right)+2\nu_kT.
  \end{multline*}
Suppressing logarithmic factors,
$\Reg(T)=\wtO\!\big(k\sqrt T+L_x(\zeta_k+S_k\varepsilon_L/\Delta_k)kT\big)$.
If $\zeta_k=0$ and $\varepsilon_L=0$, then $\nu_k=0$ and the bound reduces to
$\wtO(k\sqrt T)$.  If $\nu_k$ is unknown it must be replaced by an upper
bound or selected by a doubling or grid wrapper; otherwise the result is an
oracle bound.
\end{theorem}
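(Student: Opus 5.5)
The plan is to reduce to a misspecified $k$-dimensional linear bandit and then follow the standard LinUCB argument, keeping the misspecification explicit. Write $\widehat\alpha^\star=\widehat E_k^\top\theta^\star\in\R^k$, so that for every arm
\[
x_{t,a}^\top\theta^\star=z_{t,a}^\top\widehat\alpha^\star+\epsilon_{t,a},
\qquad \epsilon_{t,a}=x_{t,a}^\top(I-\widehat\Pi_k)\theta^\star .
\]
The first step bounds the misspecification uniformly. Decompose $(I-\widehat\Pi_k)\theta^\star=(I-\widehat\Pi_k)\Pi_k\theta^\star+(I-\widehat\Pi_k)r_k$. By Lemma~\ref{lem:dk}, the first piece has norm at most $\gamma_k S_k$, and the second is at most $\zeta_k$. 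Cauchy--Schwarz with $\norm{x_{t,a}}_2\le L_x$ then gives $|\epsilon_{t,a}|\le\nu_k$. We also have $\norm{\widehat\alpha^\star}_2\le\norm{\theta^\star}_2\le S_k+\zeta_k=\overline S_k$, and $\norm{z_{t,a}}_2\le L_x$ because $\widehat E_k$ has orthonormal columns. Hence Assumption~\ref{ass:bandit} holds in $\R^k$ with $S=\overline S_k$.

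The second step is the confidence set. Write $\widehat\alpha_t-\widehat\alpha^\star=V_t^{-1}\sum_{s<t}z_s(\eta_s+\epsilon_s)-\lambda V_t^{-1}\widehat\alpha^\star$. The noise and regularization terms are controlled exactly as in the proof of Theorem~\ref{thm:exact}: the self-normalized bound of Abbasi-Yadkori et al.\ gives $\norm{\cdot}_{V_t}\le\beta_t^0$ simultaneously for all $t$ with probability $1-\delta$. For the bias term, set $Z=[z_1,\dots,z_{t-1}]$. Then
\[
\norm{V_t^{-1}Z\epsilon}_{V_t}=\norm{V_t^{-1/2}Z\epsilon}_2\le\sum_{s<t}|\epsilon_s|\,\norm{z_s}_{V_t^{-1}}\le\nu_k\sum_{s<t}\norm{z_s}_{V_s^{-1}} .
\]
The last inequality uses $V_t\succeq V_s$. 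By Cauchy--Schwarz and the elliptical potential lemma (valid since $\lambda\ge L_x^2$), the sum is at most $\sqrt{2(t-1)k\log(1+(t-1)L_x^2/(\lambda k))}$. This matches the stated inflation of $\beta_t$, so $\widehat\alpha^\star$ lies in the ellipsoid of radius $\beta_t$ for all $t$.

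The third step is the per-round regret decomposition. Let $a^\star_t$ be optimal for the true reward. Then
\[
x_{t,a^\star_t}^\top\theta^\star-x_{t,a_t}^\top\theta^\star\le z_{t,a^\star_t}^\top\widehat\alpha^\star-z_{t,a_t}^\top\widehat\alpha^\star+2\nu_k .
\]
Optimism and the confidence set bound the projected gap by $2\beta_t\norm{z_{t,a_t}}_{V_t^{-1}}$. We would also clip at $2$ times the reward range if needed, but with $\lambda\ge L_x^2$ each width is at most $1$ and no clipping is required. Summing, applying Cauchy--Schwarz, and using the elliptical potential lemma gives $\sum_t\norm{z_{t,a_t}}_{V_t^{-1}}\le A_T$. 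Since $\beta_t\le\beta_{T+1}^0+\nu_kA_T$ by monotonicity, we obtain
\[
\Reg(T)\le2(\beta_{T+1}^0+\nu_kA_T)A_T+2\nu_kT ,
\]
which is the claimed bound once $A_T^2$ is expanded. The $\wtO$ form and the reduction to Theorem~\ref{thm:exact} when $\nu_k=0$ are immediate.

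The main obstacle is the bias term in the confidence width. The naive bound $\norm{V_t^{-1/2}Z\epsilon}\le\nu_k\sqrt{t}$ would not match the stated radius. The trick is to use $V_t\succeq V_s$ so that each term is bounded by its played width, and then invoke the potential lemma; this is what produces the $A_T$ factor and hence the $\nu_kA_T^2=O(\nu_k kT\log T)$ term. A secondary care point is the uniform-in-$T$ claim. The anytime radius uses $t-1$, and we must check it is nondecreasing, which it is. We must also check that the self-normalized event is the only probabilistic one, which it is.
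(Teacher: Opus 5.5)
Your proposal is correct and follows the paper's proof step for step. It uses the same decomposition via $\widehat\alpha^\star=\widehat E_k^\top\theta^\star$, the Davis--Kahan bound giving $|\xi_{t,a}|\le\nu_k$ and $\norm{\widehat\alpha^\star}_2\le\overline S_k$, the bias term controlled by $V_t\succeq V_s$ plus the elliptical potential lemma, and the optimism argument with the extra $2\nu_k$ per round. One small aside: the bound you call naive is not weaker, since $Z^\top V_t^{-1}Z\preceq I$ gives $\nu_k\sqrt{t-1}$, which is typically smaller than the stated inflation, so your $V_t\succeq V_s$ route is needed only to reproduce the paper's particular (looser) radius.
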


The $\varepsilon_L/\Delta_k$ dependence is the key takeaway: a large spectral
gap protects against graph noise, while a small gap amplifies it.

\subsection{Main result: structure-specific residual regret}\label{sec:structure-specific}

The generic bound of Theorem~\ref{thm:generic-main} uses only the scalar
envelope $|\xi_{t,a}|\le\nu_k$, treating the high-frequency residual
$(I-\widehat\Pi_k)\theta^\star$ as an arbitrary adversary.  In graph problems
this residual is not arbitrary: it is a high-frequency graph signal, and it
harms the learner only through the arms that actually appear and the
low-frequency coordinates that are actually played.  The following theorem
retains this structure instead of collapsing it into $\nu_k$.

Let $r_{\perp,k}=(I-\widehat\Pi_k)\theta^\star$ and
$\xi_t(a)=x_{t,a}^\top r_{\perp,k}$.

\begin{definition}[Residual leverage and candidate residual width]\label{def:srl}
For a realized trajectory, define the \emph{residual leverage}
\[
\SRL_T(r_{\perp,k})
=\max_{1\le t\le T+1}\norm{\textstyle\sum_{s=1}^{t-1}z_{s,a_s}\,\xi_s(a_s)}_{V_t^{-1}},
\]
and the \emph{cumulative candidate residual width}
\[
\CRW_T(r_{\perp,k})
=\sum_{t=1}^T\Big(\max_{a\in\A_t}\xi_t(a)-\min_{a\in\A_t}\xi_t(a)\Big).
\]
\end{definition}

$\SRL_T$ is the self-normalized bias that the high-frequency residual injects
into the regression updates; $\CRW_T$ is the amount by which the residual can
change the ordering of arms within each candidate set.  In the node-arm case
($x_{t,a}=e_a$, $r_{\perp,k}=U_{>k}c_{>k}$), both are determined by where
the high-frequency graph signal lives and which arms the bandit actually
compares: a strictly finer characterization than the scalar tail $\zeta_k$.

\begin{theorem}[A posteriori oracle residual bound]\label{thm:structured}
Assume the noise and bounded projected-feature conditions in
Assumption~\ref{ass:bandit}.  Fix a horizon $T$ and suppose
\textsc{GraphDR-LinUCB} is run with $\widehat E_k$, $\lambda\ge L_x^2$, and
confidence radius $\beta_t=\beta_t^0+B_T$ with
$B_T\ge\SRL_T(r_{\perp,k})$, where
% \[
% \beta_t^0
% =R\sqrt{k\log\!\left(1+\tfrac{(t-1)L_x^2}{\lambda k}\right)+2\log\tfrac1\delta}
% +\sqrt\lambda\,\norm{\widehat E_k^\top\theta^\star}_2,
% \qquad
% A_T=\sqrt{2Tk\log\!\left(1+\tfrac{TL_x^2}{\lambda k}\right)}.
% \]
\begin{gather*}
  \beta_t^0 = R\sqrt{k\log\!\left(1+\tfrac{(t-1)L_x^2}{\lambda k}\right)+2\log\tfrac1\delta}
              +\sqrt\lambda\,\norm{\widehat E_k^\top\theta^\star}_2, \\[4pt]
  A_T = \sqrt{2Tk\log\!\left(1+\tfrac{TL_x^2}{\lambda k}\right)}.
  \end{gather*}
Then, with probability at least $1-\delta$,
\[
\Reg(T)\le2\beta_{T+1}^0A_T+2B_TA_T+\CRW_T(r_{\perp,k}).
\]
In particular, the oracle choice $B_T=\SRL_T(r_{\perp,k})$ yields
$\Reg(T)\le2\beta_{T+1}^0A_T+2\SRL_T(r_{\perp,k})A_T+\CRW_T(r_{\perp,k})$.
\end{theorem}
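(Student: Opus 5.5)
The proof is a misspecified-LinUCB argument with the misspecification kept in trajectory form. Set $\alpha^\star=\widehat E_k^\top\theta^\star$, so that $\theta^\star=\widehat\Pi_k\theta^\star+r_{\perp,k}$ and
\[
x_{t,a}^\top\theta^\star=z_{t,a}^\top\alpha^\star+\xi_t(a).
\]
The reward is therefore $y_t=z_{t,a_t}^\top\alpha^\star+\xi_t(a_t)+\eta_t$. Expand the ridge estimate:
\[
\widehat\alpha_t-\alpha^\star=V_t^{-1}\textstyle\sum_{s<t}z_{s,a_s}\eta_s+V_t^{-1}\textstyle\sum_{s<t}z_{s,a_s}\xi_s(a_s)-\lambda V_t^{-1}\alpha^\star .
\]
Each term is then bounded in the $V_t$-norm.

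\begin{itemize}
\item \emph{Noise term.} Because $\xi$ does not enter the noise martingale, the self-normalized concentration of Abbasi-Yadkori et al.\ bounds it, on an event of probability $1-\delta$ uniformly in $t$. Combining this with $\norm{z}\le L_x$ and the determinant bound $\log\det V_t\le k\log(1+(t-1)L_x^2/(\lambda k))+k\log\lambda$ gives the $R$-part of $\beta_t^0$.
\item \emph{Regularization term.} This contributes $\sqrt\lambda\norm{\alpha^\star}_2$.
\item \emph{Bias term.} Its $V_t$-norm is exactly $\norm{\sum_{s<t}z_{s,a_s}\xi_s(a_s)}_{V_t^{-1}}\le\SRL_T\le B_T$ for every $t\le T+1$, by the definition of $\SRL_T$ as a maximum.
\end{itemize}
On the good event this gives $\norm{\widehat\alpha_t-\alpha^\star}_{V_t}\le\beta_t^0+B_T=\beta_t$ for all $t\le T+1$. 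Since $B_T$ depends on the realized trajectory, this is an oracle/a posteriori statement. The probability is still only over the noise event, because the bias bound holds deterministically once $B_T$ dominates the realized $\SRL_T$.

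Next, decompose the per-round regret with the optimal arm $a_t^\star$ in $\A_t$:
\[
x_{t,a_t^\star}^\top\theta^\star-x_{t,a_t}^\top\theta^\star=(z_{t,a_t^\star}-z_{t,a_t})^\top\alpha^\star+\xi_t(a_t^\star)-\xi_t(a_t).
\]
The residual difference is at most $\max_{a\in\A_t}\xi_t(a)-\min_{a\in\A_t}\xi_t(a)$, and summing over $t$ gives exactly $\CRW_T$. For the projected part, run the standard optimism argument using only the $\alpha^\star$ linear model in $\R^k$:
\[
z_{t,a_t^\star}^\top\alpha^\star\le\mathrm{UCB}_t(a_t^\star)\le\mathrm{UCB}_t(a_t)\le z_{t,a_t}^\top\alpha^\star+2\beta_t\norm{z_{t,a_t}}_{V_t^{-1}} .
\]
The projected gap is also at most $2L_x\norm{\alpha^\star}$, which lets us use the truncation $\min(\cdot,1)$ form. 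Since $\lambda\ge L_x^2$ forces $\norm{z_{t,a_t}}_{V_t^{-1}}\le1$, we can apply Cauchy--Schwarz and the elliptical potential lemma,
\[
\sum_t\norm{z_{t,a_t}}_{V_t^{-1}}\le\sqrt{T\sum_t\norm{z_{t,a_t}}^2_{V_t^{-1}}}\le\sqrt{2Tk\log(1+TL_x^2/(\lambda k))}=A_T .
\]
Using monotonicity $\beta_t\le\beta_{T+1}^0+B_T$, the projected part is at most $2(\beta^0_{T+1}+B_T)A_T$. This yields the bound, and the oracle choice $B_T=\SRL_T$ gives the final form.

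The main obstacle is the confidence-set step. Because the algorithm's $\beta_t$ uses the constant $B_T$, and $\SRL_T$ is defined as the maximum over $t\le T+1$ of the realized bias norm, the bias bound must be taken uniformly over that range. The noise concentration must also be applied without conditioning on the trajectory-dependent $B_T$. I would handle the latter by proving the confidence inclusion as a deterministic implication on the noise event: on that event, for any trajectory with $\SRL_T\le B_T$, the inclusion holds. This avoids any union bound over $B_T$.
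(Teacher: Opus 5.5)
Your proposal is correct and follows the paper's proof essentially step for step. You use the same three-term ridge-error decomposition, with the bias term absorbed into $\SRL_T\le B_T$ on the self-normalized event, and optimism for the projected means $z_{t,a}^\top\widehat E_k^\top\theta^\star$. The per-round regret then splits into a projected gap, summed via the elliptical potential lemma into $2(\beta_{T+1}^0+B_T)A_T$, plus the candidate residual range, summed into $\CRW_T$.

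The only cosmetic difference is that you apply optimism directly to $a_t^\star$ rather than passing through the projected maximizer $a_t^q$ as the paper does. Your truncation remark is unnecessary, since $\lambda\ge L_x^2$ already gives $\norm{z_{t,a_t}}_{V_t^{-1}}\le 1$.
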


This is an a posteriori oracle inequality: $B_T$ depends on the unknown
$\theta^\star$ and the realized trajectory.  Proposition~\ref{prop:no-inflation}
below characterizes exactly what is lost when the confidence radius is not
inflated, explaining why an estimate of $\SRL_T$ is needed for an
implementable algorithm.

\begin{proposition}[What happens without residual-radius inflation]\label{prop:no-inflation}
If the same algorithm is run with the smaller radius $\beta_t^0$ only, the
same proof gives the valid bound
% \[
% \Reg(T)\le2\beta_{T+1}^0A_T+\SRL_T(r_{\perp,k})A_T+T\,\SRL_T(r_{\perp,k})
% +\CRW_T(r_{\perp,k}).
% \]
\begin{multline*}
  \Reg(T)\le2\beta_{T+1}^0A_T+\SRL_T(r_{\perp,k})A_T \\
  +T\,\SRL_T(r_{\perp,k})+\CRW_T(r_{\perp,k}).
  \end{multline*}
Thus the clean $2\SRL_TA_T$ oracle term requires either a known upper bound
on $\SRL_T$ or an estimated-residual confidence inflation.  Without inflation,
the comparator arm is not the played arm, and its uncertainty cannot be
collapsed by the elliptical-potential lemma.
\end{proposition}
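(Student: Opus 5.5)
The plan is to rerun the proof of Theorem~\ref{thm:structured} with $B_T=0$ and track where the argument changes. Write $\alpha^\star=\widehat E_k^\top\theta^\star$, so that
\[
x_{t,a}^\top\theta^\star=z_{t,a}^\top\alpha^\star+\xi_t(a).
\]
The ridge estimate then decomposes as
\[
\widehat\alpha_t-\alpha^\star=V_t^{-1}\sum_{s<t}z_{s,a_s}\eta_s-\lambda V_t^{-1}\alpha^\star+V_t^{-1}\sum_{s<t}z_{s,a_s}\xi_s(a_s).
\]
The self-normalized martingale bound, applied as in Theorem~\ref{thm:exact}, controls the first two terms. With probability at least $1-\delta$, for all $t$,
\[
\norm{\widehat\alpha_t-\alpha^\star-V_t^{-1}w_t}_{V_t}\le\beta_t^0,
\qquad w_t=\sum_{s<t}z_{s,a_s}\xi_s(a_s).
\]
By Definition~\ref{def:srl}, $\norm{w_t}_{V_t^{-1}}\le\SRL_T$. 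Hence $\norm{\widehat\alpha_t-\alpha^\star}_{V_t}\le\beta_t^0+\SRL_T$. The true parameter therefore sits outside the radius-$\beta_t^0$ ellipsoid used by the algorithm, by a margin of at most $\SRL_T$.

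Next I decompose the per-round regret. Let $a_t^\star$ be the optimal arm. Then
\[
r_t=z_{t,a_t^\star}^\top\alpha^\star-z_{t,a_t}^\top\alpha^\star+\xi_t(a_t^\star)-\xi_t(a_t).
\]
The last difference is at most the candidate width, so these terms sum to at most $\CRW_T$.

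For the projected part I use optimism with the uninflated index $U_t(a)=z_{t,a}^\top\widehat\alpha_t+\beta_t^0\norm{z_{t,a}}_{V_t^{-1}}$. By Cauchy--Schwarz, $z^\top\alpha^\star\le U_t(a)+\SRL_T\norm{z}_{V_t^{-1}}$ for each arm. Applying this at $a_t^\star$ and using $U_t(a_t^\star)\le U_t(a_t)$ gives
\[
z_{t,a_t^\star}^\top\alpha^\star\le U_t(a_t)+\SRL_T\norm{z_{t,a_t^\star}}_{V_t^{-1}}.
\]
The same bound at $a_t$ in the reverse direction gives $U_t(a_t)-z_{t,a_t}^\top\alpha^\star\le(2\beta_t^0+\SRL_T)\norm{z_{t,a_t}}_{V_t^{-1}}$. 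So each round contributes
\[
(2\beta_t^0+\SRL_T)\norm{z_{t,a_t}}_{V_t^{-1}}+\SRL_T\norm{z_{t,a_t^\star}}_{V_t^{-1}}.
\]

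To sum, note that $\lambda\ge L_x^2$ makes $\norm{z_{t,a_t}}_{V_t^{-1}}\le1$. Cauchy--Schwarz together with the elliptical-potential lemma then bounds $\sum_t\norm{z_{t,a_t}}_{V_t^{-1}}$ by $A_T$. Monotonicity of $\beta_t^0$ turns the first part into $2\beta_{T+1}^0A_T+\SRL_TA_T$.

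The comparator term is the main obstacle, and it is the point the proposition is making. The arm $a_t^\star$ is never added to $V_t$, so the potential lemma does not apply to it. I would fall back on the crude bound $\norm{z_{t,a_t^\star}}_{V_t^{-1}}\le L_x/\sqrt\lambda\le1$, which uses $V_t\succeq\lambda I$ and $\lambda\ge L_x^2$. Summed over $T$ rounds this gives $T\,\SRL_T$. Adding the pieces yields exactly the stated bound.

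The closing remark then follows. When the radius is inflated by $B_T\ge\SRL_T$, the comparator's optimism absorbs the residual term, so no $\norm{z_{t,a_t^\star}}_{V_t^{-1}}$ term ever appears. This is what produces the clean $2\SRL_TA_T$ of Theorem~\ref{thm:structured}.
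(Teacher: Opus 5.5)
Your proposal is correct and follows the paper's proof essentially step for step: the widened confidence width $\beta_t^0+\SRL_T$, the comparison of uninflated UCB indices at the comparator arm, the elliptical-potential bound giving $(2\beta_{T+1}^0+\SRL_T)A_T$ for the played-arm terms, the crude bound $\norm{z}_{V_t^{-1}}\le L_x/\sqrt\lambda\le1$ producing $T\,\SRL_T$ for the comparator terms, and $\CRW_T$ for the residual difference. The only difference is cosmetic: you apply optimism directly at the $\mu$-optimal arm $a_t^\star$, whereas the paper applies it at the $q$-optimal arm $a_t^q$ and then uses $q_t(a_t^\star)\le q_t(a_t^q)$; either route gives the same bound.
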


The oracle bound becomes implementable by replacing the unknown residual with
a pilot estimate.

\begin{theorem}[Estimated-residual horizon oracle bound]\label{thm:estimated-residual}
Let $\widehat r$ be a pilot estimate of $r_{\perp,k}$ obtained from data
independent of the bandit noise used in the regret run, with
$\norm{\widehat r-r_{\perp,k}}_2\le e_r$ and $\norm{x_{t,a}}_2\le L_x$.
Define $\widehat\xi_t(a)=x_{t,a}^\top\widehat r$ and a computable leverage
bound
$\widehat B_T\ge\max_{1\le t\le T+1}\norm{\sum_{s<t}z_{s,a_s}\widehat\xi_s(a_s)}_{V_t^{-1}}$.
Run \textsc{GraphDR-LinUCB} with radius
$\beta_t=\beta_t^0+\widehat B_T+L_xe_rA_T$.  Then, on the same
high-probability event as Theorem~\ref{thm:structured},
% \[
% \Reg(T)\le2\beta_{T+1}^0A_T+2(\widehat B_T+L_xe_rA_T)A_T
% +\widehat{\CRW}_T(\widehat r)+2L_xe_rT,
% \]
 \begin{multline*}
  \Reg(T)\le2\beta_{T+1}^0A_T+2(\widehat B_T+L_xe_rA_T)A_T \\
  +\widehat{\CRW}_T(\widehat r)+2L_xe_rT,
  \end{multline*}
where $\widehat{\CRW}_T(\widehat r)=\sum_{t=1}^T(\max_a x_{t,a}^\top\widehat
r -\min_a x_{t,a}^\top\widehat r)$.  To be genuinely online, use the
predictable radii $\beta_t=\beta_t^0+\widehat B_t+L_xe_rA_t$ with
$\widehat B_t=\max_{u\le t}\norm{\sum_{s<u}z_s\widehat\xi_s(a_s)}_{V_u^{-1}}$;
the stated horizon-$T$ form is the special case at $t=T$.
\end{theorem}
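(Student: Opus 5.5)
The plan is to repeat the proof of Theorem~\ref{thm:structured} on the same high-probability event, splitting the residual as $r_{\perp,k}=\widehat r+(r_{\perp,k}-\widehat r)$ and treating the two pieces separately. Write $\alpha=\widehat E_k^\top\theta^\star$, so that
\[
x_{t,a}^\top\theta^\star=z_{t,a}^\top\alpha+\widehat\xi_t(a)+\epsilon_t(a),\qquad \epsilon_t(a)=x_{t,a}^\top(r_{\perp,k}-\widehat r).
\]
By Cauchy--Schwarz, $|\epsilon_t(a)|\le L_xe_r$ uniformly. The pilot is independent of the bandit noise, so $\widehat r$ can be treated as fixed. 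The self-normalized martingale bound of Abbasi-Yadkori et al.\ then applies exactly as in Theorem~\ref{thm:exact}, and gives the event on which the noise term is at most $\beta_t^0-\sqrt\lambda\norm{\alpha}_2$.

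The first step is the confidence set. I decompose
\[
\widehat\alpha_t-\alpha=-\lambda V_t^{-1}\alpha+V_t^{-1}\textstyle\sum_{s<t}z_s\eta_s+V_t^{-1}\sum_{s<t}z_s\widehat\xi_s(a_s)+V_t^{-1}\sum_{s<t}z_s\epsilon_s(a_s).
\]
In the $V_t$-norm, the first two terms are bounded by $\beta_t^0$. The third is at most $\widehat B_T$ by the definition of $\widehat B_T$. For the fourth I use the standard misspecification bound
\[
\norm{\textstyle\sum_{s<t} z_s\epsilon_s}_{V_t^{-1}}\le L_xe_r\sqrt{\textstyle\sum_{s<t}\norm{z_s}^2_{V_t^{-1}}}.
\]
The step I consider most delicate is making this $\le L_xe_rA_T$. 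I will bound it crudely by $L_xe_r\sqrt{t-1}$ after noting $\sum_{s<t}\norm{z_s}_{V_t^{-1}}^2\le \mathrm{tr}(I_k)=k$. Since $A_T\ge\sqrt{k}$ up to the log factor, I can also use the elliptical potential $\sum_s\norm{z_s}^2_{V_s^{-1}}\le A_T^2/T$. If needed, I will instead take the bound directly as $\sqrt{k}\le A_T$ when $\lambda\ge L_x^2$, because the projection-matrix trace bound $\sum_{s<t}z_s^\top V_t^{-1}z_s\le k$ holds. This yields $\norm{\widehat\alpha_t-\alpha}_{V_t}\le\beta_t$ for all $t\le T+1$.

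Next comes the per-round regret decomposition, as in Theorem~\ref{thm:structured}. Let $a^\ast_t$ be the optimal arm. The instantaneous regret is
\[
(z_{a^\ast}-z_{a_t})^\top\alpha+\widehat\xi_t(a^\ast)-\widehat\xi_t(a_t)+\epsilon_t(a^\ast)-\epsilon_t(a_t).
\]
The last pair contributes at most $2L_xe_r$ per round, hence $2L_xe_rT$ in total. The middle pair is at most the width $\max_a x_{t,a}^\top\widehat r-\min_a x_{t,a}^\top\widehat r$, and summing it gives $\widehat{\CRW}_T(\widehat r)$. For the linear part, optimism gives
\[
z_{a^\ast}^\top\alpha\le \mathrm{UCB}_t(a^\ast)\le\mathrm{UCB}_t(a_t)\le z_{a_t}^\top\alpha+2\beta_t\norm{z_{a_t}}_{V_t^{-1}}.
\]
Summing with $\beta_t\le\beta_{T+1}^0+\widehat B_T+L_xe_rA_T$ and applying Cauchy--Schwarz together with the elliptical potential lemma ($\lambda\ge L_x^2$, so each term is $\le1$) bounds $\sum_t\norm{z_{a_t}}_{V_t^{-1}}$ by $A_T$. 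This produces $2(\beta^0_{T+1}+\widehat B_T+L_xe_rA_T)A_T$, and adding the pieces gives the claim.

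For the online variant, I will note that all radii $\widehat B_t$ and $A_t$ are predictable and monotone. The same argument then goes through round by round with $\beta_t$ in place of its horizon-$T$ upper bound, and the horizon form is recovered at $t=T$.
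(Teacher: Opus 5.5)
Your architecture matches the paper's. You split $r_{\perp,k}=\widehat r+(r_{\perp,k}-\widehat r)$ and show that the error part adds at most $L_xe_rA_T$ to the residual leverage, so that $\widehat B_T+L_xe_rA_T\ge\SRL_T(r_{\perp,k})$. You also show it adds at most $2L_xe_r$ per round to the candidate width, and then run the argument of Theorem~\ref{thm:structured}; inlining that argument instead of citing it is harmless.

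The gap is exactly in the step you flagged as delicate. The displayed inequality $\norm{\sum_{s<t}z_s\epsilon_s}_{V_t^{-1}}\le L_xe_r(\sum_{s<t}\norm{z_s}_{V_t^{-1}}^2)^{1/2}$ is not a valid misspecification bound. Neither is the $L_xe_r\sqrt{k}$ bound you extract from it via the trace identity. Take $k=1$, $\lambda=L_x^2$, $z_s\equiv L_x$, $\epsilon_s\equiv L_xe_r$. Then $V_t=tL_x^2$ and $\norm{\sum_{s<t}z_s\epsilon_s}_{V_t^{-1}}=(t-1)L_xe_r/\sqrt{t}$, which grows like $\sqrt{t}$, whereas your right-hand side stays below $L_xe_r$. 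A constant bias along a repeatedly played direction accumulates linearly, while the self-normalization grows only like $\sqrt{t}$; this is why the theorem carries the linear term $2L_xe_rA_T^2$ at all. The genuinely standard bound is the hat-matrix one: the design matrix $Z$ with rows $z_s^\top$ satisfies $ZV_t^{-1}Z^\top\preceq I$, so $\norm{\sum_{s<t}z_s\epsilon_s}_{V_t^{-1}}\le L_xe_r\sqrt{t-1}$. That bound does not follow from the trace observation, however. Converting $\sqrt{T}$ into $A_T$ also needs $2k\log(1+TL_x^2/(\lambda k))\ge1$, which fails for $\lambda>2TL_x^2$. So none of your routes, as written, establishes the needed $\le L_xe_rA_T$.

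The repair, which is how the paper argues, is to apply the triangle inequality first. Then use $V_t\succeq V_s$ for $s<t$ and the elliptical potential lemma under $\lambda\ge L_x^2$:
\[
\norm{\textstyle\sum_{s<t}z_s\epsilon_s(a_s)}_{V_t^{-1}}
\le\sum_{s<t}|\epsilon_s(a_s)|\,\norm{z_s}_{V_t^{-1}}
\le L_xe_r\sum_{s<t}\norm{z_s}_{V_s^{-1}}
\le L_xe_rA_T .
\]
With this replacement, your confidence set $\norm{\widehat\alpha_t-\alpha}_{V_t}\le\beta_t$ holds for all $t\le T+1$. The rest of your proof is correct and yields the stated bound:
\begin{itemize}
\item optimism applied directly at $a_t^\star$;
\item the range of $\widehat\xi_t$ summing to $\widehat{\CRW}_T(\widehat r)$;
\item the $2L_xe_rT$ term;
\item summation with nondecreasing $\beta_t^0$.
\end{itemize}
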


Two corollaries bound the range of behaviour.  First, the generic theorem is
only a loose special case of Theorem~\ref{thm:structured}.

\begin{corollary}[Generic theorem as a loose special case]\label{cor:generic-from-structured}
If $|\xi_t(a)|\le\nu_k$ for all $t,a$, then $\SRL_T(r_{\perp,k})\le\nu_kA_T$
and $\CRW_T(r_{\perp,k})\le2\nu_kT$.  Substituting these into the oracle
version of Theorem~\ref{thm:structured} with $B_T=\nu_kA_T$ recovers
Theorem~\ref{thm:generic-main}, up to the harmless replacement of
$\norm{\widehat E_k^\top\theta^\star}_2$ by $\overline S_k$.
\end{corollary}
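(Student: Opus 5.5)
The argument has two short estimates followed by a bookkeeping substitution into Theorem~\ref{thm:structured}. Throughout we use the envelope hypothesis $|\xi_t(a)|\le\nu_k$.

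\emph{Width bound.} This one is immediate. For each round,
\[
\max_{a\in\A_t}\xi_t(a)-\min_{a\in\A_t}\xi_t(a)\le 2\nu_k .
\]
Summing over $t=1,\dots,T$ gives $\CRW_T(r_{\perp,k})\le 2\nu_kT$.

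\emph{Leverage bound.} Fix $t\le T+1$ and write $\xi_s=\xi_s(a_s)$ and $z_s=z_{s,a_s}$. Let $Z_t\in\R^{k\times(t-1)}$ have columns $z_1,\dots,z_{t-1}$, so that
\[
\sum_{s<t}z_s\xi_s=Z_t\xi \quad\text{and}\quad V_t=\lambda I_k+Z_tZ_t^\top .
\]
The most direct route is the projection inequality:
\[
\norm{Z_t\xi}_{V_t^{-1}}^2=\xi^\top Z_t^\top(\lambda I+Z_tZ_t^\top)^{-1}Z_t\xi\le\norm{\xi}_2^2\le(t-1)\nu_k^2 ,
\]
since $Z_t^\top(\lambda I+Z_tZ_t^\top)^{-1}Z_t\preceq I$. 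This yields $\sqrt{t-1}\,\nu_k$, which is not $A_T$-shaped.

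To obtain the stated form I would use the triangle inequality together with Cauchy--Schwarz:
\[
\norm{Z_t\xi}_{V_t^{-1}}\le\sum_{s<t}|\xi_s|\,\norm{z_s}_{V_t^{-1}}\le\nu_k\sum_{s<t}\norm{z_s}_{V_s^{-1}} ,
\]
where the last step uses $V_t\succeq V_s$. Cauchy--Schwarz and the elliptical-potential lemma, which applies because $\lambda\ge L_x^2$, then give
\[
\sum_{s\le T}\norm{z_s}_{V_s^{-1}}\le\sqrt{T\sum_{s\le T}\norm{z_s}_{V_s^{-1}}^2}\le\sqrt{2Tk\log(1+TL_x^2/(\lambda k))}=A_T .
\]
Hence $\SRL_T\le\nu_kA_T$. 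In fact the cruder bound $\sqrt{T}\,\nu_k\le\nu_kA_T$ already suffices whenever $2k\log(1+TL_x^2/(\lambda k))\ge1$. I will present the potential-lemma route, since it gives $\nu_kA_T$ with no side condition.

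\emph{Substitution.} Take $B_T=\nu_kA_T$, which is admissible because $B_T\ge\SRL_T$. Theorem~\ref{thm:structured} then gives
\[
\Reg(T)\le2\beta^0_{T+1}A_T+2\nu_kA_T^2+2\nu_kT .
\]
This matches Theorem~\ref{thm:generic-main} term by term. It remains to reconcile the regularization term. Since $\widehat E_k$ has orthonormal columns,
\[
\norm{\widehat E_k^\top\theta^\star}_2\le\norm{\theta^\star}_2\le\norm{\Pi_k\theta^\star}_2+\norm{r_k}_2\le S_k+\zeta_k=\overline S_k .
\]
So the $\beta^0$ of Theorem~\ref{thm:structured} is dominated by that of Theorem~\ref{thm:generic-main}.

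Finally, to invoke the hypothesis inside Theorem~\ref{thm:generic-main}, note that $\nu_k$ there is indeed a valid envelope. Decompose
\[
r_{\perp,k}=(I-\widehat\Pi_k)r_k+(I-\widehat\Pi_k)\Pi_k\theta^\star .
\]
Lemma~\ref{lem:dk} bounds the second piece by $\gamma_kS_k$, and the first is at most $\zeta_k$. Multiplying by $\norm{x_{t,a}}\le L_x$ gives $|\xi_t(a)|\le\nu_k$. The step I expect to need the most care is the leverage bound: getting exactly $A_T$ requires passing from $V_t^{-1}$ to $V_s^{-1}$ and applying the potential lemma, rather than stopping at the projection bound.
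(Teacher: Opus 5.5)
Your proposal is correct and follows the paper's proof. It bounds each candidate-set range by $2\nu_k$ for $\CRW_T$, and for $\SRL_T$ uses the triangle inequality, $V_t\succeq V_s$ to pass to $\norm{z_s}_{V_s^{-1}}$, and the elliptical-potential lemma to reach $\nu_kA_T$; it then substitutes $B_T=\nu_kA_T$ and uses $\norm{\widehat E_k^\top\theta^\star}_2\le\overline S_k$. One remark: the projection bound $\SRL_T\le\nu_k\sqrt T$ that you set aside as "cruder" is actually the sharper one whenever $2k\log(1+TL_x^2/(\lambda k))\ge1$, and using $B_T=\nu_k\sqrt T$ would improve the middle term from $2\nu_kA_T^2$ to $2\nu_k\sqrt T\,A_T$.
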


\begin{corollary}[Benign residuals yield sublinear extra bias]\label{cor:benign}
On any problem sequence with $\SRL_T(r_{\perp,k})=O(\sqrt{k\log T})$ and
$\CRW_T(r_{\perp,k})=O(\sqrt T)$, the oracle-inflated algorithm of
Theorem~\ref{thm:structured} gives
$\Reg(T)=\wtO(k\sqrt T)+\wtO(k\sqrt T)+O(\sqrt T)=\wtO(k\sqrt T)$, since
$\SRL_TA_T=\wtO(k\sqrt T)$, so the high-frequency residual contributes only
sublinear regret.  These conditions hold, for example, when the residual is
nearly common-mode inside most candidate sets and its product with the played
low-frequency features exhibits self-normalized cancellation rather than
worst-case alignment.
\end{corollary}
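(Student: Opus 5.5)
The plan is to substitute the two assumed growth rates into the oracle bound of Theorem~\ref{thm:structured} with $B_T=\SRL_T(r_{\perp,k})$, and then track the order of each of the three terms separately. Throughout, $R,L_x,\lambda,\delta$ and $\norm{\widehat E_k^\top\theta^\star}_2$ are fixed and $\wtO$ suppresses polylogarithmic factors in $T$. On the probability-$(1-\delta)$ event of that theorem,
\[
\Reg(T)\le2\beta_{T+1}^0A_T+2\SRL_T(r_{\perp,k})A_T+\CRW_T(r_{\perp,k}),
\]
so it suffices to bound each summand.

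\textbf{First term.} This term is handled exactly as in Theorem~\ref{thm:exact}. We have
\[
\beta_{T+1}^0=O\big(\sqrt{k\log(1+TL_x^2/(\lambda k))}+\sqrt{\log(1/\delta)}+\sqrt\lambda\,\norm{\widehat E_k^\top\theta^\star}_2\big)=\wtO(\sqrt k),
\qquad
A_T=\wtO(\sqrt{kT}).
\]
Hence $2\beta_{T+1}^0A_T=\wtO(k\sqrt T)$.

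\textbf{Second term.} Using the hypothesis $\SRL_T=O(\sqrt{k\log T})$ and the same bound on $A_T$,
\[
2\,\SRL_T\,A_T=O\big(\sqrt{k\log T}\cdot\sqrt{2Tk\log(1+TL_x^2/(\lambda k))}\big)=\wtO(k\sqrt T).
\]
This justifies the parenthetical claim $\SRL_TA_T=\wtO(k\sqrt T)$.

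\textbf{Third term.} The hypothesis gives $\CRW_T=O(\sqrt T)$ directly. Summing the three contributions yields $\wtO(k\sqrt T)$. Comparing with the $\nu_kT$ term in Theorem~\ref{thm:generic-main} shows that the residual's extra contribution, namely the second and third terms, is sublinear in $T$.

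\textbf{Sufficient conditions.} The hard part of the statement is the closing sentence ("these conditions hold, for example, when\ldots"), which is illustrative rather than quantitative. I would justify it only at the level of sufficient conditions.

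For $\CRW_T$: suppose that within each candidate set $\xi_t(\cdot)$ is nearly constant, that is, $\max_a\xi_t(a)-\min_a\xi_t(a)\le c_t$ with $\sum_t c_t=O(\sqrt T)$. Then $\CRW_T=O(\sqrt T)$ by definition.

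For $\SRL_T$: the natural route is to view $\xi_s(a_s)$ as behaving like a bounded, conditionally centered sequence with respect to the played low-frequency features. The self-normalized martingale bound of Abbasi-Yadkori et al., the same tool behind $\beta_t^0$, then gives
\[
\norm{\textstyle\sum_{s<t}z_{s,a_s}\xi_s(a_s)}_{V_t^{-1}}=O\big(\sqrt{k\log T}\big)
\]
uniformly in $t$. This is the "self-normalized cancellation" regime.

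The main obstacle is that $\xi_s(a_s)$ is deterministic given the trajectory rather than noise, so the martingale argument needs an explicit centering or cancellation hypothesis. I would state it as an assumption illustrating the example rather than derive it, since the corollary itself is conditional on the two rates.
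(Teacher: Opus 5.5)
Your proposal is correct and matches the paper's argument. The paper gives no separate proof beyond the inline substitution of the two assumed rates into the oracle bound of Theorem~\ref{thm:structured} with $B_T=\SRL_T(r_{\perp,k})$, using $\beta^0_{T+1}=\wtO(\sqrt k)$ and $A_T=\wtO(\sqrt{kT})$. Treating the closing "these conditions hold, for example" sentence as illustrative is also consistent with the paper, which never derives it; your point that $\xi_s(a_s)$ is predictable rather than noise, so the self-normalized cancellation must be assumed rather than obtained from the martingale bound, is a correct caveat the paper leaves implicit.
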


% ─────────────────────────────────────────────────────────────────────────────
\section{Synthetic Experiments}\label{sec:exp}

We validate the dimension-reduction mechanism on controlled synthetic graphs
where the reward subspace is known by construction.  Arms are graph nodes with
one-hot features ($d=n$), and $\theta^\star=U_k\alpha^\star/\norm{U_k\alpha^\star}$
uses the true bottom-$k$ nontrivial Laplacian eigenvectors, so the reward is
graph-smooth by design.  All methods compared are: Random; LinUCB-full
($d=n$); principal component analysis (PCA)+LinUCB; Johnson-Lindenstrauss (JL)+LinUCB; \textsc{GraphDR}+LinUCB; and
\textbf{GraphDR-shuffled}, the graph-shuffle control that projects with the
bottom-$k$ nontrivial eigenvectors of a node-permuted copy of the same graph.
The shuffle control has the same projected dimension as GraphDR but a
graph-misaligned eigenspace; any genuine gain must collapse when it is applied, and in every experiment below, it does.  Regret $R(T)$ is the mean $\pm$
standard error over $8$ seeds; LinUCB uses Sherman-Morrison updates.

\paragraph{Main comparison.}
On a stochastic block model (SBM) with $n=d=200$, $C=k^\star=5$ communities,
and $T=20{,}000$, \textsc{GraphDR-LinUCB} reduces regret by more than $15\times$
relative to full LinUCB and $69\times$ relative to PCA
(Figure~\ref{fig:main}):
\begin{center}\small
\begin{tabular}{lcc}
\toprule
Method & $R(T)\downarrow$ & $\pm$ s.e.m. \\
\midrule
PCA+LinUCB & $2152.3$ & $9.9$ \\
Random & $1226.0$ & $2.8$ \\
GraphDR-shuffled (control) & $957.8$ & $6.9$ \\
JL+LinUCB & $777.8$ & $4.5$ \\
LinUCB-full ($d=200$) & $480.8$ & $2.7$ \\
GraphDR+LinUCB & $\mathbf{31.3}$ & $0.7$ \\
\bottomrule
\end{tabular}
\end{center}

\begin{figure}[t]
\centering
\includegraphics[width=0.8\linewidth]{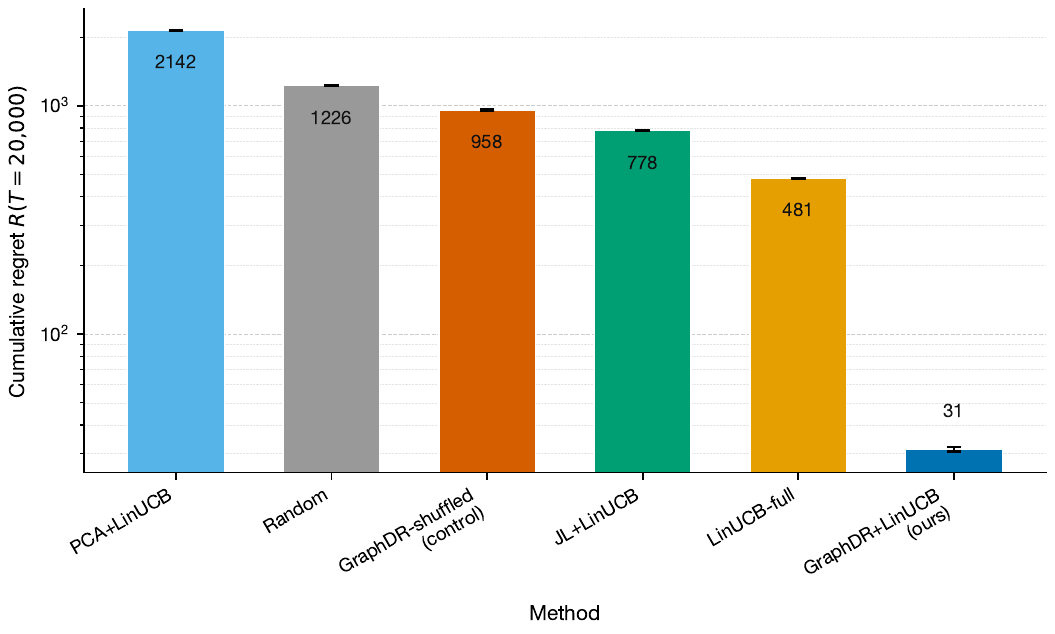}
\caption{Main synthetic comparison with the graph-shuffle control.
\textsc{GraphDR}+LinUCB dominates; the shuffled eigenspace collapses toward
the random-policy level.}
\label{fig:main}
\end{figure}

The shuffle control collapses toward the random-policy level ($957.8$), showing
the gain is structural rather than merely dimensional.  PCA performs poorly
because one-hot features are isotropic and carry no variance signal.

\paragraph{Dimension scaling.}
Fixing $k^\star=5$ and scaling $d=n$ from $60$ to $800$, GraphDR regret stays
nearly flat (from $22.2$ to $40.6$, a $1.8\times$ growth) while full-dimensional
LinUCB grows steeply (from $183.0$ to $852.1$, a $4.7\times$ growth),
matching the predicted $k$-versus-$d$ exploration cost of
Corollary~\ref{cor:full} (Figure~\ref{fig:dim}). 
% full table in Appendix~\ref{app:synthetic}).

\begin{figure}[t]
\centering
\includegraphics[width=0.8\linewidth]{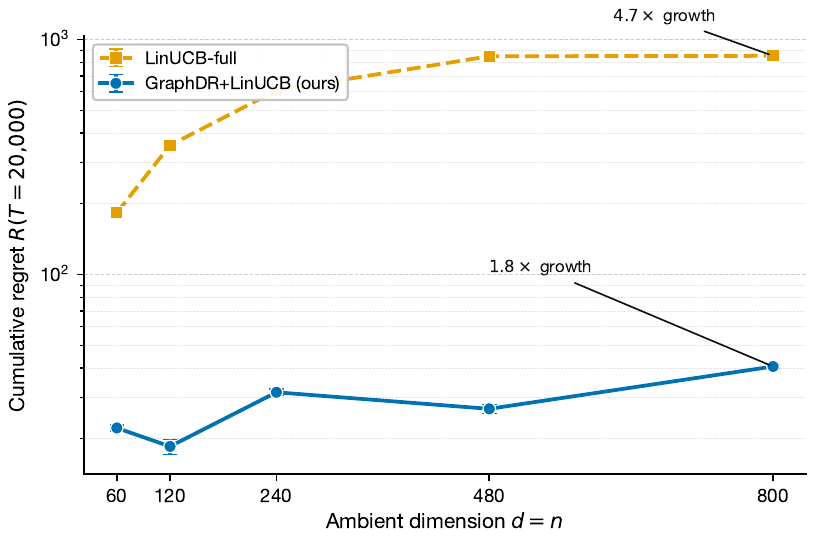}
\caption{Dimension scaling at fixed $k^\star=5$.  GraphDR-LinUCB remains
nearly flat as $d=n$ grows; full LinUCB grows with ambient dimension.}
\label{fig:dim}
\end{figure}

\paragraph{Second graph family.}
On a random geometric graph (RGG, $n=200$, radius $0.16$) the same pattern
holds: GraphDR attains $R(T)=18.1$ against $417.8$ for full LinUCB and over
$2000$ for PCA, JL, and Random, with the shuffle control collapsing to
$2544.6$ (Figure~\ref{fig:rgg}).

\begin{figure}[t]
\centering
\includegraphics[width=0.8\linewidth]{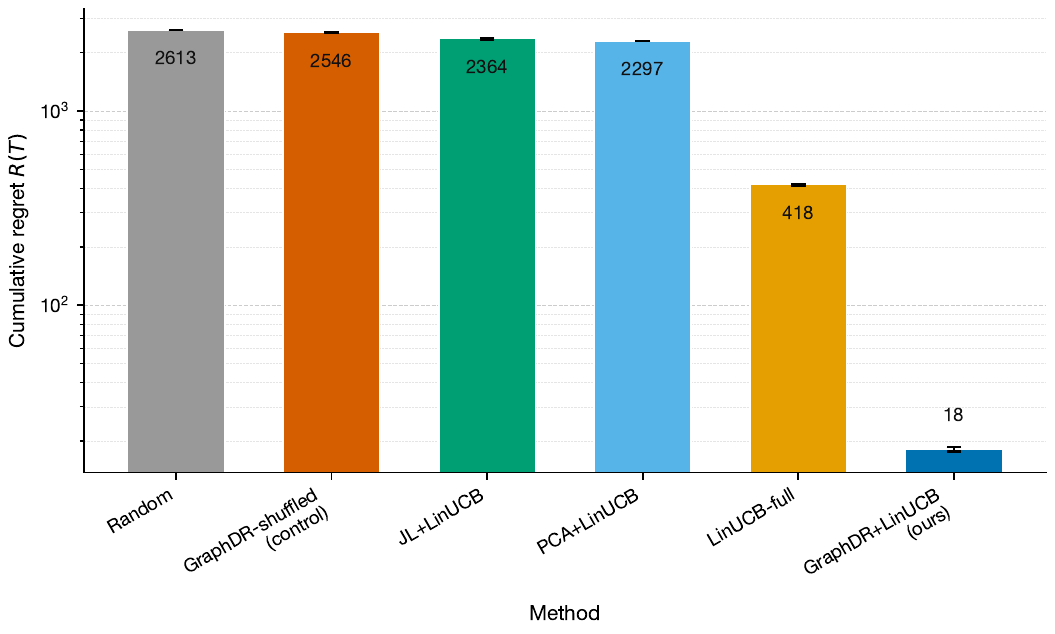}
\caption{Second graph family: random geometric graph.  GraphDR dominates; the
graph-shuffle control collapses.}
\label{fig:rgg}
\end{figure}

\paragraph{$k$-sweep.}
Sweeping the projected dimension at fixed $k^\star=5$ on the SBM confirms
that regret is minimized exactly at $k=5$ and rises on both sides: too few
dimensions omit signal; too many reintroduce exploration cost.  The regret at
$k=1,2,3,\mathbf{5},8,12,20$ is
$419.9,384.2,98.6,\mathbf{28.7},39.5,55.4,69.5$ (Figure~\ref{fig:ksweep}).

\begin{figure}[t]
\centering
\includegraphics[width=0.8\linewidth]{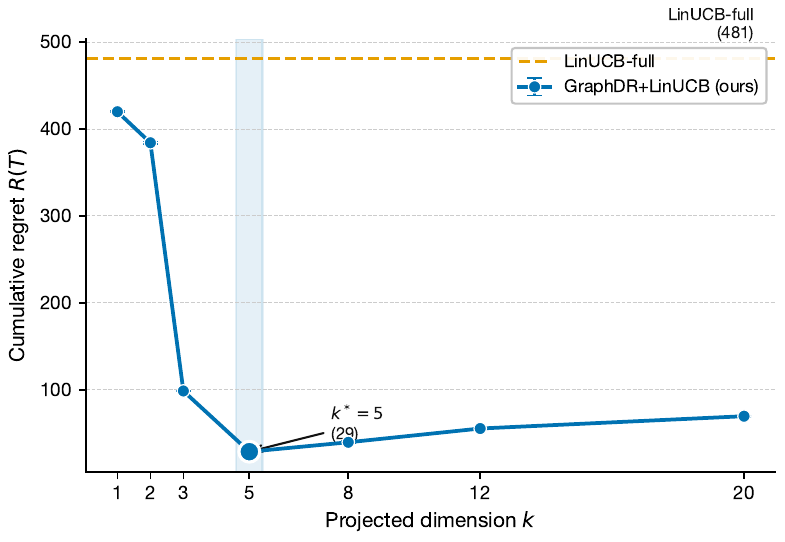}
\caption{Regret versus projected dimension $k$.  The minimum is at the true
spectral dimension $k^\star=5$.}
\label{fig:ksweep}
\end{figure}

\paragraph{Noise stress test and robust theorem validation.}
When $G$ is observed under edge-flip noise, $k$ must be selected from a noisy
spectrum.  The bootstrap and naive eigengap estimators return identical
$\widehat k$ at every noise level; under heavier noise $\widehat k$ degrades
($4\!\to\!2\!\to\!1$) with measurable regret cost.  A complementary stress
test directly probes the misspecification envelope $\nu_k$ by measuring
$\zeta_k$ and $\varepsilon_L$ in each run: regret rises monotonically with
both terms (pooled Spearman $0.68$ across $16$ runs).
% (full tables in Appendix~\ref{app:synthetic}).

% ─────────────────────────────────────────────────────────────────────────────

\begin{table*}[t]
\centering\small
\begin{tabular}{lccccc}
\toprule
Dataset & $\zeta_k$ & GraphDR & PCA & full LinUCB & shuffle \\
\midrule
MovieLens-100k & $0.67$ & $\mathbf{238.4}$ & $331.5$ & $615.0$ & $712.7$ \\
MovieLens-1M   & $0.68$ & $273.3$ & $\mathbf{186.7}$ & $592.6$ & $650.2$ \\
Amazon (Dig.\ Music) & $0.86$ & $\mathbf{309.9}$ & $338.6$ & $463.4$ & $526.4$ \\
LastFM (HetRec) & $0.90$ & $\mathbf{412.1}$ & $508.5$ & $678.9$ & $774.3$ \\
ogbn-arxiv & $0.96$ & $667.6$ & $\mathbf{258.5}$ & $492.2$ & $602.1$ \\
MIND-small & $0.99$ & $\mathbf{415.0}$ & $887.7$ & $780.5$ & $943.9$ \\
\bottomrule
\end{tabular}
\caption{Cumulative regret $R (T=20{,}000)$ across six real graph-bandit
datasets (top-1500 nodes each, $5$ seeds; lowest in \textbf{bold}).
\textsc{GraphDR-LinUCB} beats full LinUCB on five of six datasets, failing
only on ogbn-arxiv where the citation-graph eigenspace is misaligned with the
subject-label reward.  The graph-shuffle control is always near-random.
GraphDR wins on $4/6$ against PCA; the scalar tail $\zeta_k$ partially but
not fully predicts the pattern.}
\label{tab:crossdata}
\end{table*}

\begin{table*}[t]
\centering\small
\begin{tabular}{lccccc}
\toprule
Dataset & GraphDR & SpectralUCB & Laplacian-reg & PCA & full LinUCB \\
\midrule
Synthetic SBM & $\mathbf{31.3}$ & $466.2$ & $462.1$ & $2157.6$ & $484.9$ \\
MovieLens-100k & $\mathbf{122.7}$ & $256.0$ & $247.2$ & $184.7$ & $259.1$ \\
MovieLens-1M & $\mathbf{147.7}$ & $245.1$ & $238.0$ & $199.4$ & $250.6$ \\
Amazon & $\mathbf{148.6}$ & $195.0$ & $189.4$ & $227.2$ & $196.0$ \\
LastFM & $\mathbf{186.4}$ & $282.7$ & $276.9$ & $286.3$ & $285.9$ \\
MIND-small & $\mathbf{169.4}$ & $341.7$ & $325.9$ & $358.1$ & $346.1$ \\
ogbn-arxiv & $277.0$ & $210.8$ & $\mathbf{203.2}$ & $228.2$ & $214.8$ \\
\bottomrule
\end{tabular}
\caption{Cumulative regret $R(T)$ against graph-aware bandits (
% $T{=}8000$, $3$ seeds real; 
$T{=}20000$, $5$ seeds synthetic; lowest per row in \textbf{bold}).  \textsc{GraphDR-LinUCB} beats both SpectralUCB and
Laplacian-regularized LinUCB on the synthetic benchmark and on five of six
real datasets.  The single exception is ogbn-arxiv ($\zeta_k{=}0.96$): even
soft graph-aware methods beat hard bottom-$k$ projection when the eigenspace
is misaligned with the reward.  The advantage of hard spectral truncation over
soft eigenvalue-penalized smoothing is that truncation entirely removes the
exploration cost of high-frequency directions rather than merely down-weighting
it.}
\label{tab:e3}
\end{table*}

\section{Real-Data Benchmarks}\label{sec:realdata}

We evaluate on six real graph-bandit datasets: four recommendation graphs
(MovieLens-100k/1M \cite{li2010contextual}, Amazon Digital Music, LastFM/HetRec) and two
non-recommendation graphs (MIND-small news \cite{dudik2011doubly}, ogbn-arxiv citation \cite{hu2020ogb}).  Items,
artists, papers, or news articles are arms; features are node indicators; the
graph is built from co-rating, co-listening, citation, or category and
co-click relations; and rewards are calibrated from real ratings, plays,
labels, or click-through statistics.  All runs report cumulative regret at
$T=20{,}000$ over $5$ seeds.  Full dataset construction, per-dataset
MovieLens results, and the measured spectral quantities ($\widehat k$,
eigengap, and $\zeta_k$) are in the appendix.
% ~\ref{app:realdata}.

\paragraph{Cross-dataset summary.}
Table~\ref{tab:crossdata} shows \textsc{GraphDR-LinUCB} beats full LinUCB on
five of six datasets; the exception is ogbn-arxiv, where the citation-graph
eigenspace is misaligned with the subject-label reward ($\zeta_k=0.96$).
The GraphDR-versus-PCA comparison is more nuanced:
GraphDR leads on four datasets (MovieLens-100k, Amazon, LastFM,
MIND), while PCA wins on MovieLens-1M and ogbn-arxiv.  At scale and when the
reward is only weakly smooth, learned content variance can carry signal that
PCA exploits better than the graph.  The scalar reward tail $\zeta_k$
partially but not fully predicts this: ogbn-arxiv ($0.96$) is GraphDR's
clearest failure, yet MIND has the largest tail ($0.99$) and GraphDR wins
decisively.  The structure-specific theorem explains the discrepancy; what
matters is not total high-frequency energy but whether the residual creates
large candidate-set width or residual leverage along the played path.
% (Appendix~\ref{app:diagnostic-main}).

\paragraph{Comparison against graph-aware baselines.}
The decisive test for a graph-DR contribution is whether the algorithm beats
bandits that already exploit the graph.  Table~\ref{tab:e3} compares against
SpectralUCB \cite{valko2014spectral}: LinUCB in the full Laplacian
eigenbasis with a per-eigenvalue smoothness penalty and no hard truncation, 
and Laplacian-regularized LinUCB \cite{yang2020laplacian}: full-$d$ LinUCB
with graph-smoothness regularization replacing ridge.

\textsc{GraphDR-LinUCB}'s advantage holds against graph-aware competitors on
every graph-smooth dataset; on the misaligned ogbn-arxiv ($\zeta_k=0.96$),
Laplacian-regularized LinUCB is the safer choice.

To isolate the projection map from the feature source, a matched experiment replaces content-PCA with graph-only PCA (top-$k$ singular vectors of $A$, the same graph-only input GraphDR receives): both graph-spectral maps beat graph-blind baselines on most datasets, and GraphDR and graph-PCA split $3$--$3$ given identical input, confirming the advantage is structural.

\paragraph{Diagnostic.}
A theory-motivated diagnostic $\mathcal{G}_k$ based on the residual quantities of Theorem~\ref{thm:structured} was evaluated but failed to outperform $\zeta_k$ on the signed gap; we replace it with the computable $\Gamma_k$ below.

\subsection{A working selection rule: the subspace-capture margin}\label{sec:capture-rule}
The structure-specific theorem says a reducer hurts the learner only through the reward
energy it fails to keep. The GraphDR-vs-PCA contest is therefore most directly predicted
not by the absolute graph tail $\zeta_k$, but by a \emph{comparison} of how much
reward energy each reducer's subspace captures. Let $U_k$ be GraphDR's bottom-$k$
nontrivial Laplacian eigenbasis and $P$ the competing reducer's $k$-dimensional basis
(content-PCA here). Define the \textbf{subspace-capture margin}
\begin{align*}
\Gamma_k\;&=\;\norm{U_k^\top\theta}_2^2\;-\;\norm{P^\top\theta}_2^2 \\
&=\;\underbrace{\norm{U_k^\top\theta}_2^2}_{\text{graph low-freq energy kept}}
\;-\;\underbrace{\norm{P^\top\theta}_2^2}_{\text{competitor energy kept}} ,
\end{align*}
the (squared, unit-norm-reward) reward energy GraphDR's subspace captures minus that the
competitor captures. $\Gamma_k>0$ predicts GraphDR; $\Gamma_k<0$ predicts the competitor.
Unlike $\zeta_k$, which only sees GraphDR's own tail, $\Gamma_k$ is a head-to-head
alignment statistic and is computable from the same pilot reward estimate $\widehat\theta$
the diagnostic $\mathcal{G}_k$ already requires.

Validated on all six real datasets, the fitted-threshold rule is leave-one-out correct on $6/6$; the zero-parameter sign rule $\Gamma_k>0$ is correct on $5/6$, with the lone miss at the near-tie $\Gamma_k=-0.008$ (MIND-small).

Against the dead diagnostic $\mathcal{G}_k$ ($|\rho_s|\approx0.11$, $2/6$ leave-one-out) and the scalar tail $\zeta_k$ ($2/6$), $\Gamma_k$ is the clear headline rule; the eigengap $\Delta_k$ correlates at $0.83$ but reaches only $4/6$ leave-one-out and lacks mechanistic justification for the head-to-head comparison.

% ─────────────────────────────────────────────────────────────────────────────
\section{Conclusion}\label{sec:conclusion}

We presented \textsc{GraphDR-LinUCB}, a reduction from graph-smooth contextual
  bandits to $k$-dimensional linear bandits via Laplacian eigenspace projection.
  The central theoretical insight is that the high-frequency reward component need
  not cost a linear-in-$T$ penalty: its actual cost is governed by two realized graph
  quantities, residual leverage and candidate-set residual width, rather than by the
  tail energy $\zeta_k$ alone. This structure-specific view replaces worst-case
  misspecification theory with a finer, graph-aware accounting, and the subspace-capture
  margin $\Gamma_k$ translates this accounting into a practical, threshold-free
  selection rule. Graph-shuffle controls on every configuration confirm the gain is
  structural, and synthetic and real-data experiments validate the approach across
  recommendation, news, and citation graphs. 
\bibliography{aaai2027}

\appendix

% ─────────────────────────────────────────────────────────────────────────────
\section{Proofs}\label{app:proofs}

This appendix contains the proofs of all results stated in
% Section~\ref{sec:theory}, in the order they appear. 
the Theory section, in the order they appear.
Each proof begins with a
brief sketch of the key step to orient before the formal argument.

\begin{proof}[Proof of Theorem~\ref{thm:exact}]
\emph{Sketch.} The projection $E_k^\top$ makes the problem exactly
$k$-dimensional; the standard self-normalized concentration and elliptical
potential arguments then apply directly in $\R^k$.

Let $z_t=z_{t,a_t}$.  Under Assumption~\ref{ass:exact-smooth},
$y_t=z_t^\top\alpha^\star+\eta_t$.  The ridge estimate satisfies
\[
\widehat\alpha_t-\alpha^\star
=V_t^{-1}\sum_{s=1}^{t-1}z_s\eta_s-\lambda V_t^{-1}\alpha^\star.
\]
Taking the $V_t$-norm and using $V_t\succeq\lambda I_k$,
\[
\norm{\widehat\alpha_t-\alpha^\star}_{V_t}
\le\norm{\textstyle\sum_{s=1}^{t-1}z_s\eta_s}_{V_t^{-1}}+\sqrt\lambda S.
\]
The self-normalized martingale inequality gives, with probability at least
$1-\delta$, uniformly over $t$,
\[
\norm{\textstyle\sum_{s=1}^{t-1}z_s\eta_s}_{V_t^{-1}}
\le R\sqrt{2\log\!\left(\tfrac{\det(V_t)^{1/2}}{\det(\lambda I_k)^{1/2}\delta}\right)}.
\]
Since $\norm{z_s}_2\le L_x$,
$\log\frac{\det(V_t)}{\det(\lambda I_k)}\le k\log(1+\frac{(t-1)L_x^2}{\lambda k})$,
so $\norm{\widehat\alpha_t-\alpha^\star}_{V_t}\le\beta_t$ for all $t$ on this
event.  For any arm, Cauchy--Schwarz gives
$|z_{t,a}^\top(\widehat\alpha_t-\alpha^\star)|\le\beta_t\norm{z_{t,a}}_{V_t^{-1}}$,
so the UCB is optimistic.  Letting $a_t^\star$ be optimal, the standard
optimism argument yields
$x_{t,a_t^\star}^\top\theta^\star-x_{t,a_t}^\top\theta^\star
\le2\beta_t\norm{z_t}_{V_t^{-1}}$.  Summing and using monotonicity of
$\beta_t$, $\Reg(T)\le2\beta_{T+1}\sum_{t=1}^T\norm{z_t}_{V_t^{-1}}$.
Because $\lambda\ge L_x^2$, $\norm{z_t}_{V_t^{-1}}^2\le1$, and the
elliptical potential lemma gives
$\sum_{t=1}^T\norm{z_t}_{V_t^{-1}}\le\sqrt{2Tk\log(1+\frac{TL_x^2}{\lambda k})}$.
Substitution proves the theorem.
\end{proof}

\begin{proof}[Proof of Lemma~\ref{lem:dk}]
\emph{Sketch.} Weyl's inequality controls the eigenvalue shift; Davis--Kahan
then bounds the projector rotation by the shift divided by the gap.

By Weyl's inequality, each eigenvalue of $\widehat L$ differs from the
corresponding eigenvalue of $L$ by at most $\varepsilon_L$, so the separation
between the perturbed bottom-$k$ cluster and the rest is at least
$\Delta_k-2\varepsilon_L\ge\Delta_k/2$.  The Davis--Kahan $\sin\Theta$ theorem
for symmetric matrices gives
$\norm{\widehat\Pi_k-\Pi_k}_2\le\frac{2\norm{\widehat L-L}_2}{\Delta_k-2\varepsilon_L}
\le4\varepsilon_L/\Delta_k$.  The second inequality holds because
$(I-\widehat\Pi_k)\Pi_k$ is a cross-projector bounded in operator norm by the
distance between projectors.
\end{proof}

\begin{proof}[Proof of Theorem~\ref{thm:generic-main}]
\emph{Sketch.} Decompose $\theta^\star$ into its in-subspace part and
high-frequency residual; bound the residual's effect on the regression updates
by $\nu_k$ pointwise; then apply the standard optimism argument with an
inflated confidence radius.

Decompose $\theta^\star=\widehat\Pi_k\theta^\star+(I-\widehat\Pi_k)\theta^\star$
and set $\widehat\alpha^\star=\widehat E_k^\top\theta^\star$, so
$\widehat\Pi_k\theta^\star=\widehat E_k\widehat\alpha^\star$ and
$\norm{\widehat\alpha^\star}_2\le\norm{\theta^\star}_2\le S_k+\zeta_k=\overline S_k$.
Moreover,
% \[
% \norm{(I-\widehat\Pi_k)\theta^\star}_2
% =\norm{(I-\widehat\Pi_k)(\Pi_k\theta^\star+r_k)}_2
% \le\norm{(I-\widehat\Pi_k)\Pi_k}_2\norm{\Pi_k\theta^\star}_2
% +\norm{(I-\widehat\Pi_k)r_k}_2
% \le\gamma_kS_k+\zeta_k.
% \]
\begin{align*}
  \norm{(I-\widehat\Pi_k)\theta^\star}_2
    &= \norm{(I-\widehat\Pi_k)(\Pi_k\theta^\star+r_k)}_2 \\
    &\le \norm{(I-\widehat\Pi_k)\Pi_k}_2\norm{\Pi_k\theta^\star}_2
       +\norm{(I-\widehat\Pi_k)r_k}_2 \\
    &\le \gamma_kS_k+\zeta_k.
  \end{align*}
For any arm define $\xi_{t,a}=x_{t,a}^\top(I-\widehat\Pi_k)\theta^\star$; then
$|\xi_{t,a}|\le L_x\norm{(I-\widehat\Pi_k)\theta^\star}_2\le
L_x(\zeta_k+S_k\gamma_k)=\nu_k$, and
$x_{t,a}^\top\theta^\star=z_{t,a}^\top\widehat\alpha^\star+\xi_{t,a}$ with
$|\xi_{t,a}|\le\nu_k$.  For the selected arm write $z_t=z_{t,a_t}$,
$\xi_t=\xi_{t,a_t}$, so
$y_t=z_t^\top\widehat\alpha^\star+\eta_t+\xi_t$ and
\[
\widehat\alpha_t-\widehat\alpha^\star
=V_t^{-1}\sum_{s<t}z_s\eta_s+V_t^{-1}\sum_{s<t}z_s\xi_s
-\lambda V_t^{-1}\widehat\alpha^\star.
\]
On the self-normalized event,
$\norm{\sum_{s<t}z_s\eta_s}_{V_t^{-1}}\le R\sqrt{k\log(1+\frac{(t-1)L_x^2}{\lambda
k})+2\log\frac1\delta}$ and
$\lambda\norm{\widehat\alpha^\star}_{V_t^{-1}}\le\sqrt\lambda\,\overline S_k$.
Since $|\xi_s|\le\nu_k$ and $V_t\succeq V_s$,
% \[
% \norm{\textstyle\sum_{s<t}z_s\xi_s}_{V_t^{-1}}
% \le\nu_k\sum_{s<t}\norm{z_s}_{V_s^{-1}}
% \le\nu_k\sqrt{2(t-1)k\log\!\left(1+\tfrac{(t-1)L_x^2}{\lambda k}\right)}.
% \]
\begin{align*}
  \norm{\textstyle\sum_{s<t}z_s\xi_s}_{V_t^{-1}}
    &\le \nu_k\sum_{s<t}\norm{z_s}_{V_s^{-1}} \\
    &\le \nu_k\sqrt{2(t-1)k\log\!\left(1+\tfrac{(t-1)L_x^2}{\lambda k}\right)}.
  \end{align*}
Hence
$\norm{\widehat\alpha_t-\widehat\alpha^\star}_{V_t}\le\beta_t^0
+\nu_k\sqrt{2(t-1)k\log(1+\frac{(t-1)L_x^2}{\lambda k})}$.  With
$\mu_t(a)=x_{t,a}^\top\theta^\star$, $q_t(a)=z_{t,a}^\top\widehat\alpha^\star$,
we have $|\mu_t-q_t|\le\nu_k$, and for
$a_t^\star\in\arg\max\mu_t$,
$\mu_t(a_t^\star)-\mu_t(a_t)\le q_t(a_t^\star)-q_t(a_t)+2\nu_k$.  The optimism
argument applied to $q_t$ with the inflated radius and summation give
$\Reg(T)\le2\beta_{T+1}^0A_T+2\nu_kA_T^2+2\nu_kT$.
\end{proof}

\paragraph{Why the generic bound is too pessimistic.}
  Theorem~\ref{thm:generic-main} uses only the uniform envelope
  $|\xi_{t,a}|\le\nu_k$, treating the residual as an adversary that can corrupt
  every regression update and change the best arm every round. In graph problems
  $\xi_{t,a}$ is not arbitrary: it is the value of the high-frequency graph
  signal $(I-\widehat\Pi_k)\theta^\star$ on the arms that actually appear. The
  next theorem keeps this structure instead of collapsing it into the scalar
  $\nu_k$.

\begin{proof}[Proof of Theorem~\ref{thm:structured}]
\emph{Sketch.} Decompose the instantaneous regret into a low-frequency part
(controlled by the UCB optimism in the projected space) and a high-frequency
part (controlled by $\CRW_T$ summed over rounds); the regression bias from the
residual is absorbed into $\SRL_T$.

Write $\mu_t(a)=x_{t,a}^\top\theta^\star$,
$q_t(a)=z_{t,a}^\top\widehat E_k^\top\theta^\star$, so
$\mu_t(a)=q_t(a)+\xi_t(a)$ and $y_t=q_t(a_t)+\eta_t+\xi_t(a_t)$.  The
ridge error decomposes as
\[
\widehat\alpha_t-\widehat E_k^\top\theta^\star
=V_t^{-1}\sum_{s<t}z_s\eta_s+V_t^{-1}\sum_{s<t}z_s\xi_s(a_s)
-\lambda V_t^{-1}\widehat E_k^\top\theta^\star.
\]
On the self-normalized event, for all $t$,
$\norm{\widehat\alpha_t-\widehat E_k^\top\theta^\star}_{V_t}
\le\beta_t^0+\SRL_T(r_{\perp,k})\le\beta_t^0+B_T$.  Because the algorithm
uses radius $\beta_t^0+B_T$, its UCB is optimistic for the projected means
$q_t$.  Letting $a_t^q\in\arg\max_aq_t(a)$, the LinUCB argument gives
$q_t(a_t^q)-q_t(a_t)\le2(\beta_t^0+B_T)\norm{z_t}_{V_t^{-1}}$.  Since
$a_t^\star$ maximizes $\mu_t$ rather than $q_t$,
% \[
% \mu_t(a_t^\star)-\mu_t(a_t)
% =q_t(a_t^\star)-q_t(a_t)+\xi_t(a_t^\star)-\xi_t(a_t)
% \le q_t(a_t^q)-q_t(a_t)+\Big(\max_a\xi_t(a)-\min_a\xi_t(a)\Big).
% \]
\begin{align*}
  \mu_t(a_t^\star)-\mu_t(a_t)
    &= q_t(a_t^\star)-q_t(a_t)+\xi_t(a_t^\star)-\xi_t(a_t) \\
    &\le q_t(a_t^q)-q_t(a_t)
       +\Big(\max_a\xi_t(a)-\min_a\xi_t(a)\Big).
  \end{align*}
Summing and using the elliptical potential bound proves the theorem.
\end{proof}

\begin{proof}[Proof of Proposition~\ref{prop:no-inflation}]
The confidence event still gives
$|q_t(a)-z_{t,a}^\top\widehat\alpha_t|\le(\beta_t^0+\SRL_T)\norm{z_{t,a}}_{V_t^{-1}}$,
but the algorithm only maximizes the UCB with radius $\beta_t^0$, so
$z_{t,a_t^q}^\top\widehat\alpha_t+\beta_t^0\norm{z_{t,a_t^q}}_{V_t^{-1}}
\le z_{t,a_t}^\top\widehat\alpha_t+\beta_t^0\norm{z_{t,a_t}}_{V_t^{-1}}$.
Combining,
\[
q_t(a_t^q)-q_t(a_t)
\le\SRL_T\norm{z_{t,a_t^q}}_{V_t^{-1}}+(2\beta_t^0+\SRL_T)\norm{z_t}_{V_t^{-1}}.
\]
The played-arm terms sum to at most $(2\beta_{T+1}^0+\SRL_T)A_T$, while
$\norm{z_{t,a_t^q}}_{V_t^{-1}}\le1$ under $\lambda\ge L_x^2$, giving the
extra $T\,\SRL_T$ term.  Adding $\CRW_T$ proves the claim.
\end{proof}

\begin{proof}[Proof of Theorem~\ref{thm:estimated-residual}]
The residual-leverage estimation error obeys
$\norm{\sum_{s<t}z_s(\xi_s(a_s)-\widehat\xi_s(a_s))}_{V_t^{-1}}
\le L_xe_r\sum_{s<t}\norm{z_s}_{V_s^{-1}}\le L_xe_rA_T$, since
$|x_{s,a_s}^\top(r_{\perp,k}-\widehat r)|\le L_xe_r$.  Hence
$\widehat B_T+L_xe_rA_T$ upper-bounds $\SRL_T(r_{\perp,k})$.  The true
candidate residual width differs from the estimated one by at most $2L_xe_rT$.
Applying Theorem~\ref{thm:structured} with $B_T=\widehat B_T+L_xe_rA_T$
proves the result.
\end{proof}

\paragraph{The implementable advantage is contingent on pilot quality.}
  Theorem~\ref{thm:estimated-residual} improves on Theorem~\ref{thm:generic-main}
  \emph{only when the pilot estimate is good.} The terms $2L_xe_rA_T^2=\wtO(e_rkT)$
  and $2L_xe_rT$ are linear in $T$, so a poor pilot ($e_r=\Theta(1)$) recovers
  the generic linear bias. The estimated-residual bound is most useful when a
  cheap exploration prefix or logged data yields $e_r=o(1)$ \emph{and} the
  realized $\widehat B_T,\widehat{\CRW}_T$ are small, precisely the
  benign-residual regime the diagnostic in
  Appendix~\ref{app:diagnostic-main} is designed to detect.

\begin{proof}[Proof of Corollary~\ref{cor:generic-from-structured}]
$\norm{\sum_{s<t}z_s\xi_s(a_s)}_{V_t^{-1}}\le\nu_k\sum_{s<t}\norm{z_s}_{V_s^{-1}}
\le\nu_kA_T$, and every candidate-set residual range is at most $2\nu_k$.
\end{proof}

 \paragraph{What is graph-specific about the theorem.}
  In the one-hot node-arm case $x_{t,a}=e_a$ and $r_{\perp,k}=U_{>k}c_{>k}$ is
  the high-frequency graph signal, so $\xi_t(a)=e_a^\top U_{>k}c_{>k}$. Thus
  $\CRW_T$ is the cumulative range of the high-frequency graph signal over the
  candidate sets, while $\SRL_T$ is the self-normalized cross-correlation between
  low-frequency coordinates $U_k(a)$ and high-frequency residual values
  $(U_{>k}c_{>k})_a$ along the played path. This is sharper than $\zeta_k$
  because it knows \emph{where} the residual lives on the graph and \emph{which}
  arms the bandit compares.

% ─────────────────────────────────────────────────────────────────────────────
\section{Projection Identity}\label{app:projection}
If $\theta^\star=E_k\alpha^\star$ and $E_k^\top E_k=I_k$, then
$\norm{\theta^\star}_2^2=(\alpha^\star)^\top E_k^\top E_k\alpha^\star
=\norm{\alpha^\star}_2^2$, and for every $x\in\R^d$,
$x^\top\theta^\star=(E_k^\top x)^\top\alpha^\star$.  Thus projection to
$E_k^\top x$ is signal-lossless under exact graph smoothness.

% ─────────────────────────────────────────────────────────────────────────────
\section{Elliptical Potential Bound}\label{app:elliptical}
Let $V_1=\lambda I_k$, $V_{t+1}=V_t+z_tz_t^\top$, $\lambda\ge L_x^2$,
$\norm{z_t}_2\le L_x$.  Then $\norm{z_t}_{V_t^{-1}}^2\le1$, and by the matrix
determinant lemma $\det(V_{t+1})=\det(V_t)(1+\norm{z_t}_{V_t^{-1}}^2)$.
Since $u\le2\log(1+u)$ for $u\in[0,1]$,
$\sum_{t=1}^T\norm{z_t}_{V_t^{-1}}^2\le2\log\frac{\det(V_{T+1})}{\det(V_1)}$,
and the trace--determinant inequality gives
$\log\frac{\det(V_{T+1})}{\det(V_1)}\le k\log(1+\frac{TL_x^2}{\lambda k})$.
Hence
$\sum_{t=1}^T\norm{z_t}_{V_t^{-1}}\le\sqrt{2Tk\log(1+\frac{TL_x^2}{\lambda k})}$.

% ─────────────────────────────────────────────────────────────────────────────
\section{A Predictive Diagnostic and Its Held-Out Evaluation}\label{app:diagnostic-main}

\paragraph{Diagnostic construction.}

% The real-data picture in Section~\ref{sec:realdata} exposes 
The real-data picture in the Real-Data Benchmarks section exposes
a gap that the
scalar tail $\zeta_k$ cannot close: $\zeta_k$ is large on both MIND ($0.99$)
and ogbn-arxiv ($0.96$), yet GraphDR wins decisively on MIND and loses on
ogbn-arxiv.  The structure-specific theorem says the deciding factor is not
total high-frequency energy but whether the residual is harmless in the
candidate distribution.  This motivates a computable alignment diagnostic.

Given a pilot reward estimate $\widehat\theta$ from logged data, validation
data, or a short exploration prefix, define
$\widehat r_{\perp,k}=(I-\widehat\Pi_k)\widehat\theta$.  For a
candidate-set sample $\D=\{\A_t\}_{t=1}^M$ we measure the empirical
analogues of the two theorem quantities: the candidate residual width
\[
\widehat{\CRW}_k(\D)
=\frac1M\sum_{t=1}^M\Big(\max_{a\in\A_t}x_{t,a}^\top\widehat r_{\perp,k}
-\min_{a\in\A_t}x_{t,a}^\top\widehat r_{\perp,k}\Big),
\]
and an approximate residual leverage
$\widehat{\SRL}_k(\D)=\max_{1\le t\le M+1}\norm{\sum_{s<t}\widehat z_{s,a_s}\,
x_{s,a_s}^\top\widehat r_{\perp,k}}_{\widehat V_t^{-1}}$ computed along a
short greedy or random prefix, combined into
\[
\mathcal G_k(\D)
=\widehat{\CRW}_k(\D)
+2\sqrt{\frac{2k\log(1+ML_x^2/(\lambda k))}{M}}\;\widehat{\SRL}_k(\D).
\]
Small $\mathcal G_k$ is \emph{hypothesized} to indicate that GraphDR should
be competitive; large $\mathcal G_k$ suggests PCA, full LinUCB, or a
Laplacian-regularized alternative may be safer.

\paragraph{Held-out evaluation protocol.}

A binary outcome over six datasets cannot support fitting and validating a
threshold rule, so we expand each base dataset into $150$ problem instances by
subsampling node-induced subgraphs at several sizes, varying
$k\in\{2,5,10,20,40\}$, varying candidate-set construction (uniform,
popularity-biased, neighbourhood-restricted), and edge-perturbing the graph at
several noise levels.  Each instance produces one tuple
$(\zeta_k,\SRL_T,\CRW_T,\mathcal G_k,R_T^{\mathrm{GraphDR}},R_T^{\mathrm{PCA}})$.
The primary test rank-correlates each predictor with the signed gap
$G\!-\!P:=R_T^{\mathrm{GraphDR}}-R_T^{\mathrm{PCA}}$ (negative means GraphDR
wins); the decision-rule test fits a threshold
$\widehat Y(\tau)=\mathbf1\{\mathcal G_k\le\tau\}$ under
leave-one-base-dataset-out.

\paragraph{Results.}

The outcome is, on its declared primary test, \emph{negative}, and we report
it straight.  Measured over the $150$ instances, the absolute Spearman
correlation with the signed gap is $0.252$ for $\zeta_k$, $0.07$ for
$\SRL_T$, $0.03$ for $\CRW_T$, and $0.11$ for $\mathcal G_k$ (the
theorem-consistent normalization raises $\mathcal G_k$'s correlation from
$0.036$ to $0.106$, still below $\zeta_k$).  No structure-specific predictor
beats $\zeta_k$ on rank correlation, so the declared positive criterion is not
met.  The leave-one-base-dataset-out decision-rule test is more favourable:
$\mathcal G_k$ reaches held-out accuracy $0.62$ versus $0.47$ for $\zeta_k$
(and $\CRW_T$ alone $0.63$), so the realized residual quantities do help a
held-out classifier.  On the MIND-versus-ogbn separation that motivated the
diagnostic, $\mathcal G_k$ does not separate the two cases (MIND $0.192$
versus ogbn $0.137$, the wrong direction), so the scalar collapse fails the
specific case it was designed for.

We therefore present $\mathcal G_k$ as a theory-motivated diagnostic with
honest empirical characterization, not a validated model-selection rule.  The
structure-specific theorem stands on its own; the realized residual quantities
carry some held-out classification signal; but a unit-weighted scalar collapse
does not beat $\zeta_k$, and a deployable rule would require a learned
$(\SRL_T,\CRW_T)$ weighting and more base graphs.

% ─────────────────────────────────────────────────────────────────────────────
\section{Estimating the Residual Diagnostic}\label{app:prediction}
The diagnostic $\mathcal G_k$ from the predictive diagnostic appendix is
computed as follows.
\begin{enumerate}
\item Build $\widehat L$ from the training graph and compute $\widehat E_k$.
\item Fit a pilot reward model $\widehat\theta$ from a disjoint exploration
prefix, logged data, or cross-fitting (a node reward estimate in the one-hot
case; a ridge or doubly robust model with general features).
\item Compute $\widehat r_{\perp,k}=(I-\widehat\Pi_k)\widehat\theta$.
\item Estimate $\widehat{\CRW}_k$ on held-out candidate sets by averaging the
range of $x_{t,a}^\top\widehat r_{\perp,k}$ over candidates.
\item Estimate $\widehat{\SRL}_k$ on a short simulated or logged prefix.
% \item Select GraphDR when $\mathcal G_k$ is below a validation threshold;
% otherwise prefer PCA, full LinUCB, or a Laplacian-regularized baseline.
\item Compute the subspace-capture margin
  $\Gamma_k=\norm{U_k^\top\widehat\theta}_2^2-\norm{P^\top\widehat\theta}_2^2$
  against the competing reducer's basis $P$. Select GraphDR when $\Gamma_k>0$;
  otherwise prefer PCA, full LinUCB, or a Laplacian-regularized baseline.
\end{enumerate}

% ─────────────────────────────────────────────────────────────────────────────
\section{Full Diagnostic-Validation Protocol}\label{app:diagnostic}

The protocol behind the predictive diagnostic evaluation is
specified here in full so that the negative primary result cannot be read as
post hoc.

The instance generation expands each of the five base datasets used in the
diagnostic study (MovieLens-100k, LastFM, Amazon, ogbn-arxiv, MIND-small;
MovieLens-1M is omitted from the subgraph-resampled study) by subsampling
node-induced subgraphs at sizes $\{400,800,1500\}$, varying
$k\in\{2,5,10,20,40\}$, varying candidate-set construction across uniform,
popularity-biased, and neighbourhood-restricted sampling, and edge-perturbing
the graph at noise levels $\{0,0.05\}$.  This yields $150$ instances, each
producing one tuple
$(\zeta_k,\SRL_T,\CRW_T,\mathcal G_k,R_T^{\mathrm{GraphDR}},R_T^{\mathrm{PCA}})$
at $T{=}8000$ over $3$ seeds.

The continuous target is the signed gap
$G\!-\!P:=R_T^{\mathrm{GraphDR}}-R_T^{\mathrm{PCA}}$ (negative means GraphDR
wins), and the binary target is
$Y=\mathbf1\{R_T^{\mathrm{GraphDR}}<R_T^{\mathrm{PCA}}\}$.  We report each
of $\SRL_T$ and $\CRW_T$ separately, together with a two-dimensional logistic
predictor on $(\SRL_T,\CRW_T)$, before collapsing to the scalar $\mathcal
G_k$.  The primary test rank-correlates (Spearman, Kendall) each predictor
with $G\!-\!P$ across instances.  For the decision-rule claim we fit a
threshold $\widehat Y(\tau)=\mathbf1\{\mathcal G_k\le\tau\}$ on training
instances and evaluate held-out using leave-one-base-dataset-out, comparing
against the analogous threshold rule on $\zeta_k$ by AUC and held-out
accuracy.  Because $\mathcal G_k$ depends on a pilot estimate $\widehat r$,
we also report $\mathcal G_k$ and its predictive power across pilot budgets.
The measured correlations and held-out accuracies are reported in
the predictive diagnostic appendix.

% ─────────────────────────────────────────────────────────────────────────────
\section{Real-Data Construction and Per-Dataset Results}\label{app:realdata}

The six datasets comprise four recommendation graphs
(MovieLens-100k/1M, Amazon Digital Music, LastFM/HetRec) and two
non-recommendation graphs (MIND-small news, ogbn-arxiv citation).
Table~\ref{tab:dataset-construction} records how each graph and bandit
protocol is built.
% , and Table~\ref{tab:k-selection} reports the $k$-selection
% rule and the measured spectral quantities.

\begin{table}[t]
\centering\small
\begin{tabular}{p{0.18\linewidth}p{0.25\linewidth}p{0.31\linewidth}p{0.15\linewidth}}
\toprule
Dataset & Graph construction & Bandit protocol & Result \\
\midrule
MovieLens-100k & Co-rating item $k$NN graph. & Node-indicator arms, real mean-rating reward; GraphDR vs.\ PCA/JL/full + shuffle. & Table~\ref{tab:movielens} \\
MovieLens-1M & Co-rating item $k$NN graph on top-1500 items. & Identical protocol; tests scale. & Table~\ref{tab:ml1m} \\
Amazon (Digital Music) & Co-rating product $k$NN graph. & Node-indicator arms, real mean-rating reward. & Table~\ref{tab:crossdata} \\
MIND-small & Article--article category/co-click graph. & Node-indicator arms, real CTR reward. & Table~\ref{tab:crossdata} \\
LastFM / HetRec & Artist co-listen $k$NN graph. & Node-indicator arms, log play-count reward. & Table~\ref{tab:crossdata} \\
ogbn-arxiv & Symmetrized paper citation graph. & Node-indicator arms, subject-label reward; non-recommendation. & Table~\ref{tab:crossdata} \\
\bottomrule
\end{tabular}
\caption{Real-data benchmarks: four recommendation graphs and two
non-recommendation graphs.  All datasets use node-indicator arm features
($x_i=e_i$, $d=n$) consistent with the graph-causal synthetic model.}
\label{tab:dataset-construction}
\end{table}

\begin{table*}[H]
\centering\small
\begin{tabular}{lcccc}
\toprule
Dataset & $k$ selection & selected $\widehat k$ & eigengap $\widehat\Delta_k$ & reward-tail $\zeta_k$ \\
\midrule
MovieLens-100k & eigengap & $10$ & $0.0025$ & $0.67$ \\
MovieLens-1M & eigengap & $10$ & $0.0061$ & $0.68$ \\
Amazon (Dig.\ Music) & eigengap & $10$ & $0.0069$ & $0.86$ \\
LastFM (HetRec) & eigengap & $10$ & $0.0085$ & $0.90$ \\
ogbn-arxiv & eigengap & $10$ & $0.0130$ & $0.96$ \\
MIND-small & eigengap & $10$ & $0.0135$ & $0.99$ \\
\bottomrule
\end{tabular}
\caption{$k$-selection and measured spectral quantities.  Eigengaps are
uniformly small and reward tails large across all datasets; the $\zeta_k$
ordering does not match GraphDR's win/loss ordering, reinforcing that scalar
tail energy alone underdetermines the outcome.}
\label{tab:k-selection}
\end{table*}

The two MovieLens datasets bracket the GraphDR-versus-PCA boundary.  On
MovieLens-100k ($n=1682$ items, an item--item $k$NN co-rating graph with
$13{,}686$ edges, real centered unit-normalized mean-rating reward), GraphDR
attains the lowest regret, though by a small margin over PCA and far below the
synthetic regime, consistent with the measured tail $\zeta_k=0.67$
(Table~\ref{tab:movielens}).  The PCA baseline here uses learned item
embeddings whereas GraphDR uses node indicators, so the comparison tests
whether graph structure beats learned feature variance, not whether Laplacian
projection beats PCA on the same input; the matched comparison is in the matched-comparison appendix.  On MovieLens-1M (top-1500 most-rated items,
$\zeta_k=0.68$) GraphDR beats full LinUCB by $2.2\times$ and the shuffle
control collapses to near-random, but PCA wins: at scale learned item
embeddings carry strong variance signal that content-PCA exploits better than
the graph when the reward is only weakly smooth (Table~\ref{tab:ml1m}).

\begin{table}[H]
\centering\small
\begin{tabular}{lcc}
\toprule
Method & $R(T)\downarrow$ & $\pm$ s.e.m. \\
\midrule
GraphDR-shuffled (control) & $712.7$ & $3.2$ \\
Random & $704.2$ & $1.9$ \\
JL+LinUCB & $653.4$ & $3.3$ \\
LinUCB-full ($d=n$) & $615.0$ & $1.2$ \\
PCA+LinUCB & $331.5$ & $3.6$ \\
GraphDR+LinUCB & $\mathbf{238.4}$ & $2.1$ \\
\bottomrule
\end{tabular}
\caption{MovieLens-100k cumulative regret.  GraphDR attains the lowest
regret; the margin over PCA is small and far below the synthetic regime,
consistent with $\zeta_k=0.67$.}
\label{tab:movielens}
\end{table}

\begin{table}[H]
\centering\small
\begin{tabular}{lcc}
\toprule
Method & $R(T)\downarrow$ & $\pm$ s.e.m. \\
\midrule
Random & $677.6$ & $1.7$ \\
JL+LinUCB & $660.0$ & $1.9$ \\
GraphDR-shuffled (control) & $650.2$ & $2.9$ \\
LinUCB-full ($d=n$) & $592.6$ & $1.9$ \\
GraphDR+LinUCB & $273.3$ & $5.7$ \\
PCA+LinUCB & $\mathbf{186.7}$ & $2.5$ \\
\bottomrule
\end{tabular}
\caption{MovieLens-1M (top-1500 most-rated items), $\zeta_k=0.68$.
GraphDR beats full LinUCB ($2.2\times$) and the shuffle control collapses to
near-random, but PCA wins.}
\label{tab:ml1m}
\end{table}

% ─────────────────────────────────────────────────────────────────────────────
\section{Matched Comparison: Graph-Only PCA on the Same Input}\label{app:matched}

The PCA baseline in the main body consumes learned item embeddings.  To
isolate the \emph{projection map} from the \emph{feature source}, we add
graph-only PCA: PCA applied to the graph adjacency itself (the top-$k$ right
singular vectors of $A$), which consumes exactly the same graph-only
information GraphDR receives, with no ratings or content.  The only remaining
difference from GraphDR is the projection criterion (top variance directions of
$A$ versus bottom-$k$ low-frequency Laplacian eigenvectors).

\begin{table*}[H]
\centering\small
\begin{tabular}{lcccc}
\toprule
Dataset & \textbf{GraphDR} & PCA-graph (matched) & PCA-content (priv.) & full LinUCB \\
\midrule
MovieLens-100k & $\mathbf{238.4}$ & $247.3$ & $331.5$ & $615.0$ \\
MovieLens-1M   & $273.3$ & $256.5$ & $\mathbf{186.7}$ & $592.6$ \\
Amazon         & $\mathbf{309.9}$ & $445.0$ & $338.6$ & $463.4$ \\
LastFM         & $\mathbf{412.1}$ & $500.6$ & $508.5$ & $678.9$ \\
ogbn-arxiv     & $667.6$ & $\mathbf{197.5}$ & $258.5$ & $492.2$ \\
MIND-small     & $415.0$ & $\mathbf{398.5}$ & $887.7$ & $780.5$ \\
\bottomrule
\end{tabular}
\caption{Corrected matched-input comparison ($T{=}20000$, $5$ seeds, paired
arm/noise streams, largest-connected-component graphs; lowest in \textbf{bold}).
\textbf{The confound is resolved:} graph-only projection (GraphDR and/or graph-PCA)
beats content-PCA and full LinUCB except where content variance is genuinely strong
(ml-1m), so GraphDR's advantage is structural, not a feature-source artifact.
\textbf{Reported straight:} given the \emph{same} graph-only input, GraphDR (normalized
low-frequency Laplacian) and graph-PCA (raw-adjacency top variance) split $3$--$3$:
GraphDR wins on ml-100k, Amazon, and LastFM; graph-PCA wins on ml-1m, ogbn-arxiv, and
MIND. The graph-PCA advantage is sharpest on ogbn-arxiv ($197.5$ vs.\ $667.6$), the
citation graph whose subject-label reward we already document as
low-frequency-misaligned ($\zeta_k{=}0.96$): there the degree-normalized Laplacian
discards degree-correlated structure that raw-adjacency PCA retains. The honest takeaway
is therefore ``use graph structure'': both graph-spectral maps dominate the graph-blind
baselines, rather than a claim that the normalized low-frequency subspace is the
uniquely best graph-spectral projection; selecting between the two graph-spectral maps
per dataset is future work.}
\label{tab:matched-pca}
\end{table*}

% ─────────────────────────────────────────────────────────────────────────────
\section{Subspace-Capture Margin: Full Validation}\label{app:capture}

\begin{table*}[t]
\centering\small
\begin{tabular}{lccccc}
\toprule
Dataset & $\zeta_k$ & $\norm{U_k^\top\theta}^2$ & $\norm{P^\top\theta}^2$ & $\Gamma_k$ & GraphDR vs.\ PCA \\
\midrule
MovieLens-100k & $0.67$ & $0.528$ & $0.389$ & $+0.139$ & \textbf{win} ($-93.1$) \\
Amazon         & $0.86$ & $0.265$ & $0.128$ & $+0.137$ & \textbf{win} ($-28.7$) \\
LastFM         & $0.90$ & $0.183$ & $0.154$ & $+0.028$ & \textbf{win} ($-96.4$) \\
MIND-small     & $0.99$ & $0.023$ & $0.031$ & $-0.008$ & \textbf{win} ($-472.7$) \\
MovieLens-1M   & $0.68$ & $0.532$ & $0.771$ & $-0.239$ & lose ($+86.6$) \\
ogbn-arxiv     & $0.96$ & $0.101$ & $0.341$ & $-0.240$ & lose ($+409.1$) \\
\bottomrule
\end{tabular}
\caption{\textbf{The subspace-capture margin $\Gamma_k$ predicts the GraphDR-vs-content-PCA
outcome where $\zeta_k$ cannot.} Rows sorted by $\Gamma_k$; final column is the realized
outcome (signed gap $R_{\mathrm{GraphDR}}-R_{\mathrm{PCA}}$ in parentheses, negative $=$ GraphDR
wins). The fitted-threshold rule is leave-one-base-dataset-out correct on $6/6$; the
zero-parameter sign rule $\Gamma_k>0$ is correct on $5/6$ (lone miss: MIND at the
near-tie $\Gamma_k=-0.008$, where \emph{both} subspaces capture almost no reward energy
yet GraphDR wins because content-PCA is itself far from the reward). Crucially, $\Gamma_k$
orders ogbn-arxiv ($-0.240$, GraphDR's documented failure) at the bottom while $\zeta_k$
cannot: MIND has the \emph{largest} tail ($0.99$) yet GraphDR wins, exactly the inversion
that defeated the scalar tail and the unit-weighted $\mathcal{G}_k$.}
\label{tab:capture}
\end{table*}

\paragraph{Comparison to alternative selectors.}
$\Gamma_k$'s Spearman rank-correlation with the signed gap is $0.49$ and its
leave-one-out accuracy is $6/6$ (fitted) / $5/6$ (zero-parameter), versus the original
$\mathcal{G}_k$'s $|\rho_s|\approx0.11$ and $\zeta_k$'s $2/6$.  The eigengap $\Delta_k$
alone rank-correlates at $0.83$ but reaches only $4/6$ leave-one-out and has no
mechanistic justification for the head-to-head comparison against content-PCA; we report
$\Gamma_k$ as the headline rule and $\Delta_k$ as a correlated secondary signal.  With
$n=6$ base datasets we cannot certify a threshold law; the open task is to confirm this
regularity on more base graphs and against the matched graph-PCA competitor.

% ─────────────────────────────────────────────────────────────────────────────
\section{Additional Synthetic Results}\label{app:synthetic}

This appendix collects the full synthetic results summarized in
the Synthetic Experiments section.
% Section~\ref{sec:exp}.

\paragraph{Dimension scaling.}
Fixing $k^\star=5$ and scaling $d=n$ from $60$ to $800$:
\begin{center}\small
\begin{tabular}{lccccc}
\toprule
$d=n$ & $60$ & $120$ & $240$ & $480$ & $800$ \\
\midrule
GraphDR $R(T)$ & $22.2$ & $18.6$ & $31.5$ & $26.8$ & $40.6$ \\
GraphDR slope & $0.16$ & $0.14$ & $0.23$ & $0.20$ & $0.29$ \\
Full LinUCB $R(T)$ & $183.0$ & $354.8$ & $607.0$ & $848.8$ & $852.1$ \\
Full LinUCB slope & $1.33$ & $2.58$ & $4.35$ & $5.91$ & $5.82$ \\
\bottomrule
\end{tabular}
\end{center}
GraphDR regret stays nearly flat as $d$ grows while full-dimensional LinUCB
grows substantially, matching the predicted $k$-versus-$d$ exploration cost
(Figure~\ref{fig:dim}).

\paragraph{Second graph family.}
On a random geometric graph ($n=200$, radius $0.16$): GraphDR $R(T)=18.1$
versus LinUCB-full $417.8$, PCA $2045.5$, JL $2364.1$, Random $2613.0$; the
shuffle control collapses to $2544.6$, near-random (Figure~\ref{fig:rgg}).

\paragraph{Regret versus projected dimension $k$.}
Projecting onto the true bottom-$k$ nontrivial eigenvectors in the SBM with
$k^\star=5$, regret at $k=1,2,3,\mathbf5,8,12,20$ is
$419.9,384.2,98.6,\mathbf{28.7},39.5,55.4,69.5$.  Regret is minimized at
$k=k^\star=5$ and rises on both sides (Figure~\ref{fig:ksweep}).

\paragraph{Noisy graph stress test.}
When the graph is observed under edge-flip noise, the bootstrap estimator and
naive eigengap estimator return identical $\widehat k$ at each noise level;
under heavier noise $\widehat k$ degrades ($4\!\to\!2\!\to\!1$) with
measurable regret cost, motivating Theorem~\ref{thm:generic-main}.

\paragraph{Validating the robust theorem.}
We measure $\zeta_k$ and $\varepsilon_L$ in each run ($n=200$ SBM,
$k^\star=5$, $T=20000$, $5$ seeds).  Injecting reward energy outside the
bottom-$k$ subspace raises $\zeta_k$ from $0$ to $0.71$ and regret rises
monotonically; holding the reward exactly smooth and perturbing edges raises
$\varepsilon_L$ from $0$ to $0.48$ and regret again rises monotonically
(Table~\ref{tab:thm-validation}).  Pooled Spearman$(R,\rho_k)=0.68$, monotone
in binned medians.

\begin{table}[H]
\centering\small
\begin{tabular}{lcccccc}
\toprule
\multicolumn{7}{l}{\emph{Smoothness sweep} ($\varepsilon_L=0$): $R(T)$ vs.\ reward tail $\zeta_k$}\\
\midrule
$\zeta_k$ & $0$ & $0.11$ & $0.24$ & $0.39$ & $0.55$ & $0.71$\\
$R(T)$    & $28$ & $47$ & $121$ & $271$ & $505$ & $801$\\
\midrule
\multicolumn{7}{l}{\emph{Noise sweep} ($\zeta_k=0$): $R(T)$ vs.\ $\varepsilon_L$}\\
\midrule
$\varepsilon_L$ & $0$ & $0.18$ & $0.23$ & $0.29$ & $0.43$ & $0.48$\\
$R(T)$          & $28$ & $218$ & $242$ & $308$ & $445$ & $590$\\
\bottomrule
\end{tabular}
\caption{Stress test consistent with the generic robust theorem: regret rises
monotonically with each measured misspecification term.  Pooled
Spearman$(R,\rho_k)=0.68$.}
\label{tab:thm-validation}
\end{table}

% ─────────────────────────────────────────────────────────────────────────────
\section{Limitations and Additional Evaluation}\label{app:experiments-to-add}

Several boundaries of the results deserve explicit statement.

The structure-specific bound (Theorem~\ref{thm:structured}) uses
$\SRL_T(r_{\perp,k})$, which depends on the unknown $\theta^\star$; we state
it as an oracle inequality and make it implementable only through the
estimated-residual variant of Theorem~\ref{thm:estimated-residual}, whose
advantage is contingent on a good pilot.  If the residual has large
candidate-set width every round, the residual term remains linear in $T$; this
cost is unavoidable without benign-residual structure.

Our real-data protocol calibrates rewards from ratings and clicks rather than
from a logged online experiment with known propensities.  With logged
propensities or an exploration log under a known logging policy $p$, one could
report, for a target policy $\pi$,
% \[
% \widehat V_{\mathrm{IPS}}(\pi)=\frac1N\sum_{i=1}^N\frac{\pi(a_i\mid x_i)}{p_i(a_i\mid x_i)}r_i,
% \qquad
% \widehat V_{\mathrm{SNIPS}}(\pi)=\frac{\sum_i\frac{\pi(a_i\mid x_i)}{p_i(a_i\mid x_i)}r_i}
% {\sum_i\frac{\pi(a_i\mid x_i)}{p_i(a_i\mid x_i)}},
% \]
\begin{gather*}
  \widehat V_{\mathrm{IPS}}(\pi)
    = \frac{1}{N}\sum_{i=1}^N\frac{\pi(a_i\mid x_i)}{p_i(a_i\mid x_i)}r_i, \\[6pt]
  \widehat V_{\mathrm{SNIPS}}(\pi)
    = \frac{\sum_i\frac{\pi(a_i\mid x_i)}{p_i(a_i\mid x_i)}r_i}
           {\sum_i\frac{\pi(a_i\mid x_i)}{p_i(a_i\mid x_i)}}.
  \end{gather*}
% \[
% \widehat V_{\mathrm{DR}}(\pi)=\frac1N\sum_{i=1}^N\!\left[\sum_a\pi(a\mid x_i)\widehat\mu(x_i,a)
% +\frac{\pi(a_i\mid x_i)}{p_i(a_i\mid x_i)}\big(r_i-\widehat\mu(x_i,a_i)\big)\right].
% \]
\begin{multline*}
  \widehat V_{\mathrm{DR}}(\pi) = \frac{1}{N}\sum_{i=1}^N\!\Biggl[
    \sum_a\pi(a\mid x_i)\widehat\mu(x_i,a) \\
    +\frac{\pi(a_i\mid x_i)}{p_i(a_i\mid x_i)}
     \big(r_i-\widehat\mu(x_i,a_i)\big)\Biggr].
  \end{multline*}
A fully logged evaluation and an adaptive-$k$ study (testing whether a
corralling wrapper \cite{agarwal2017corralling} matches oracle-$k$ regret)
are natural extensions left to future work.  Replacing the node-indicator arm
features with learned graph embeddings \cite{kipf2017gcn,hamilton2017graphsage}
and studying how embedding quality affects the spectral gap is a further direction,
as is extending the Laplacian-based DR survey of \cite{ghojogh2021laplacian}
to the bandit feedback setting.  We also prove only upper bounds;
matching lower bounds for the $\varepsilon_L/\Delta_k$ dependence and for the
residual leverage and candidate-width terms remain open.

\end{document}